\pgfplotsset{compat=1.18}
\pgfplotsset{
    cycle list/Paired-4,
    cycle multiindex* list={
        mark list*\nextlist
        Paired-4\nextlist
    },
} 
\title{Impact of Fairness Regulations on Institutions' Policies and Population Qualifications}
\date{}
\author{
Hamidreza Montaseri${}^{1}$ \hspace{1cm} Amin Gohari${}^{2}$ \AND \\
${}^1$Tehran Institute of Advanced Studies, \texttt{montaseri.hamidreza@gmail.com} 
\\
${}^2$The Chinese University of Hong Kong, \texttt{agohari@ie.cuhk.edu.hk} 
}
\newcommand{\Pqt}[3]{p_{#1}(#2|#3)}
\newcommand{\Sct}[1]{\Pr(D_t=1|#1)}
\newcommand{\Sctinv}[1]{\Pr(D_t=0|#1)}
\newcommand{\Sc}[2][]{\Pr^{#1}(D=1|#2)}
\newcommand{\F}{\Delta}
\newcommand{\U}{u}
\newcommand{\A}{\mathscr A}
\newcommand{\B}{\mathscr B}
\newcommand{\Uum}{u_{\text{UM}}}
\newcommand{\G}{\mathscr{C}}
\newcommand{\E}{\mathbf{E}}
\newcommand{\R}{\mathbb{R}}
\newcommand{\Delnp}{\Delta_{\text{UM}}}
\DeclareMathOperator*{\argmin}{argmin}
\DeclareMathOperator*{\argmax}{argmax}
\DeclareMathOperator*{\maximize}{maximize}
\newtheorem{definition}{Definition}
\newtheorem{theorem}{Theorem}
\newtheorem{remark}{Remark}
\newtheorem{example}{Example}
\newtheorem{lemma}{Lemma}
\begin{document}

\maketitle

\begin{abstract}
The proliferation of algorithmic systems has fueled discussions surrounding the regulation and control of their social impact. Herein, we consider a system whose primary objective is to maximize utility by selecting the most qualified individuals. To promote demographic parity in the selection algorithm, we consider penalizing discrimination across social groups. We examine conditions under which a discrimination penalty can effectively reduce disparity in the selection. Additionally, we explore the implications of such a penalty when individual qualifications may evolve over time in response to the imposed penalizing policy. We identify scenarios where the penalty could hinder the natural attainment of equity within the population. Moreover, we propose certain conditions that can counteract this undesirable outcome, thus ensuring fairness.
\end{abstract}

\section{Introduction}

The application of data-driven and algorithmic decision-making has rapidly accelerated over the past few decades. Many of these applications are of significant social importance and profoundly impact individuals. One particular concern regarding the deployment of algorithmic decision-making systems is the potential influence of societal biases on their performance, as well as their reciprocal impact on society. Numerous studies have extensively examined various aspects of this problem. Furthermore, societies are increasingly advocating for regulations to address and mitigate biases within these systems and, in some cases, even leverage them to reduce biases prevalent within society itself (\cite{2021_ai_act, smuha_race_to_regulation}).

One common concept of fairness in these discussions is demographic parity. For a system that distributes a given resource (such as loans, college admissions, or employment opportunities) within a population, it requires to ensure that each protected group within the population (e.g., gender or racial groups) receives a proportionate share relative to its size (\cite{barocas-hardt-narayanan}). In this study, we analyze the impact of enforcing this notion on utility-maximizing institutions, focusing on their policies and their effect on the population structure. Previous studies have commonly approached this problem by considering scenarios where policies must fully comply with demographic parity. However, our investigation considers a regulatory framework that imposes a penalty based on the disparity in institutions' policy, which aims to select individuals with higher qualifications. This introduces a trade-off between their utility and the associated penalty.

First, in a single-shot, static setting, we identify cases where the imposition of the penalty does not affect institutional policies. We propose a formulation for determining the requisite penalty level
that would compel the institution to deviate from its default policies in the absence of any penalty. The shape of the penalty function is shown to lead to different qualitative behaviors on the requisite penalty level. This is illustrated by comparing linear or quadratic penalty functions.

We also discuss when the penalty ensures complete demographic parity and how, in some cases, certain forms of penalty cannot guarantee this outcome.
Next, we explore a setting where institutional policies can influence the qualifications of individuals over time, thereby affecting the composition of the population. Similar to previous works (but in a different setting), we find instances in our model where fairness regulations can prevent the natural convergence of equal qualification distribution across groups. Despite that, we identify conditions that can prevent this undesirable effect.

Finally, we empirically examine our theoretical results using the Law School Admission Council's dataset (\cite{Wightman1998LSACNL}). The experiments are consistent with the theoretical results both in the static and dynamic cases (where we use synthesized dynamics). The experiments confirm that our framework corresponds to real-world scenarios and can be utilized in the regulation of discrimination and fairness.

\section{Problem Formulation in the Static Case}
First, we investigate the impact of fairness regulation in a static setting. We consider a scenario in which each member of a population possesses a feature vector denoted as $x \in \mathcal{X}$, which is accessible to an institution that makes selections over them. Furthermore, individuals can be classified into distinct groups, such as racial or gender groups, and achieving fairness across these groups is a primary concern. For simplicity of exposition, we assume that there are two groups $\A$ and $\B$, and each individual belongs to either group $\A$ or $\B$. This membership is represented by the variable $c \in \G = \{\A, \B\}$.
The qualification sought by the institution for an individual is represented by the variable $y \in \mathcal{Y} \subset \R$. Having $y>0$ means that the institution will have a positive net benefit from selecting the individual, and having $y<0$ means that the costs of selecting the individual outweigh the benefits. Throughout, we assume $\mathcal{Y}$ as a finite set and $y\neq 0$, i.e., the selection of each individual is either beneficial or harmful. We assume an underlying joint distribution $p(c,x,y)$ for an individual selected uniformly at random from the society.

The institution employs a policy to determine whether to select an individual based on her feature $x$ and group membership $c$. This decision is represented by the binary variable $D \in \{0,1\}$. The institution's selection policy can be modeled by a conditional distribution $p(d|c,x)$. Thus, the joint distributions of the random variables $D, C, X, Y$ factorizes as $$p(d,c,x,y)=p(d|c,x)p(c,x,y).$$

The institution's utility is defined as the expected value of the qualification of the selected individuals, denoted as $\U = \E[YD]$. It is important to note that this utility can be interpreted as the expected utility for a randomly selected individual who undergoes the institution's policy or as the average utility when the policy is applied to the entire population. The randomness here arises from the distribution of features and qualifications in the population, as well as any random elements present in the selection policy itself.

To enforce demographic parity, we introduce a discrimination penalty based on selection disparity across groups denoted by $\F$:
\begin{align}\label{defDelta}
\F = \left| \Pr(D=1|C=\A) - \Pr(D=1|C=\B)\right|.
\end{align}
Consequently, the objective of the institution in selecting its policy can be expressed as follows:
\begin{gather*}
\maximize_{p(d|c,x)}\; \E[DY] - \lambda \cdot g(\F)
\label{eq:base}
\end{gather*}
where $g(\cdot)$ is a non-decreasing and non-negative convex function satisfying $g(0)=0$,
and $\lambda\geq 0$. Therefore, $\lambda \cdot g(\Delta)$ determines the penalty imposed on the institution (discrimination penalty). Discrimination penalty $\lambda \cdot g(\F)$ can represent the fine imposed by the government on the institution to enforce demographic parity. It may also represent the negative effects of discrimination on the company, such as reduced productivity, decreased commitment, and employee demotivation, resulting in an overall cost to the institution. Thus, the true objective of the company can be written as $\mathsf{Profit}=\E[DY] - \lambda \cdot g(\F)$. As an example, consider $$g(\Delta)=\begin{cases}
0,&0\leq \Delta\leq \Delta^*\\
\Delta-\Delta^*,&\Delta> \Delta^*
\end{cases}.$$
This cost function sets a level $\Delta^*$ of tolerable discrimination below which no penalty is imposed.

Throughout this paper, we assume that the institution knows the underlying joint distribution $p(c,x,y)$ and can exactly solve the above optimization problem. This assumption represents an idealization of real-world scenarios, where such problems are typically addressed using approximate solutions based on the available data. Nonetheless, the insights gained from this idealized model remain applicable and valuable in practical settings.

\subsection{A simplification}\label{secsimp}

We claim that without loss of generality, we may assume that the institution has access to the qualification information of individuals, meaning that $Y$ is a function of $C$ and $X$. To see this, given any $x,c$, we define
$$\hat Y=\E[Y|X,C].$$
Thus, $\hat Y$ is a function of $(X,C)$. Then, we have
\begin{align*}
\E[DY] - \lambda \cdot g(\F)=\E\left[D\E[Y|X,C]\right]- \lambda \cdot g(\F)=\E[D\hat Y] - \lambda \cdot g(\F).
\end{align*}
Therefore, we obtain the same utility for $\hat Y$ in place of $Y$. This shows that we can restrict ourselves to the case of $Y$ being a function of $C$ and $X$.

Next, observe that the objective function $\E[DY] - \lambda \cdot g(\F)$ depends only on the marginal distribution on $(D,C,Y)$ (rather than the full joint distribution of $(D,C,Y,X)$). Consequently, we can write the optimization problem as a maximum over $p(d|c,y)$ as follows:
\begin{gather}
\maximize_{p(d|c,y)}\; \E[DY] - \lambda \cdot g(\F).
\label{eq:prob}
\end{gather}

\subsection{Related Work}
Since the start of the discourse on AI fairness, a line of work has focused on how to apply different notions of fairness in algorithmic decision-making. For instance, in machine learning, numerous efforts have been made to investigate the efficiency and performance of various approaches for aligning learning models with fairness notions (\cite{fair_ml_survey}).
Additionally, an important observation is that some common notions are inherently conflicting (\cite{kleinberg_inherent_2016, chouldechova_fair_2017}); therefore, a decision-maker cannot simultaneously satisfy them. In this paper, we abstract the details of the decision-making procedure and adopt a specific definition of fairness (demographic parity) to focus on this issue from a regulator's perspective. When and how can fairness be effectively enforced through penalization? To study this question,  we have constructed a model that extends the models examined in the existing literature. Specifically, we employ a variant of the models proposed in previous works such as \cite{mozannar_fair_2020,zhang_how_2020,raab_unintended_2021} which consider policies constrained to satisfy demographic parity, i.e., $\Delta=0$. However, in our work, we formulate fairness regulation as a penalty imposed by a regulator. If we impose an infinite discrimination penalty by letting $\lambda$ converge to infinity in \eqref{eq:prob}, we recover the requirement $\Delta=0$ as a special case. Moreover, the models proposed by \cite{mozannar_fair_2020,zhang_how_2020,raab_unintended_2021} take the support set
$\mathcal{Y}$ to be a binary set, while we take it to be an arbitrary finite set. Finally, we note that \cite{mozannar_fair_2020} assumes that the institution makes decisions based on the group identity $C$ and qualification $Y$, while \cite{zhang_how_2020} and \cite{raab_unintended_2021} consider the institution's policy as the optimal solution of an optimization problem similar to the one considered by us. We showed that these two assumptions are equivalent in our model.

We have dedicated Section \ref{sectiondynamics} to exploring the dynamics of qualifications in society over time. The interaction between the dynamics of society and algorithmic systems has recently garnered attention from the responsible AI community. As one of the first efforts in this direction, \cite{liu_delayed_2018} considered a two-stage model. In the first stage, individuals within a population receive a decision based on some qualification (e.g., credit score). Then, in the next stage, this decision influences that same qualification. The authors showed that in this model, measures like demographic parity and equality of opportunity can have a negative impact and exacerbate disparities between groups in the population. We also observe a similar phenomenon in our model.

There exist other approaches to fairness that we leave for future work. In particular, one can also analyze the interplay between society and algorithmic systems by employing game-theoretic formulations. Authors in \cite{liu_disparate_2019} consider both sides as strategic agents, wherein the system benefits from correct classification, and individuals benefit from selection and can invest in increasing their chances of being selected. The authors investigate the equilibrium of this setting and relate it to the realizability of classification. In a different framework, \cite{2016_hardt_strategic} considers a strategic classification framework and learnability in the presence of strategic individuals. Shifting the focus to the societal perspective, \cite{2019_milli_social_cost} extends this framework and shows that when the system takes into account the population's strategic behavior, it imposes a greater burden on disadvantaged groups.

\section{Effectiveness of the Discrimination Penalty in the Static Case}
\label{sec:effstat}
If there is no penalty on the amount of discrimination, i.e., $\lambda=0$, the institution will maximize $\E[DY]$. Let $Y_+$ be the positive part of $Y$, i.e., $Y_+=\max(0, Y)$. Then, since $D\in \{0,1\}$, we have
$\E[DY]\leq \E[Y_+]$ and equality is achieved by a unique policy of the institution selecting an individual when $Y > 0$.
We define \begin{equation}\Uum=\E[Y_+]\label{uumdef}\end{equation} to be the utility when $\lambda=0$.
The resulting discrimination, denoted by $\Delnp$, simplifies as follows:
\begin{align}
   \Delnp&=
    |\Sc{ C=\A} - \Sc{C=\B}| \nonumber
\\&=
   |\Pr(Y > 0| C=\A) - \Pr(Y > 0| C=\B)|.\label{defDNp}
    \end{align}
Thus, $\Delnp$, the discrimination for the optimal policy when $\lambda=0$, can be computed explicitly.

Note that if $\Pr(Y>0|C=\A) =\Pr(Y>0|C=\B)$, we have $\Delnp=0$ and there is no tension between maximizing $\E[DY]$ and minimizing $\Delta$. Thus, the more interesting case is when $\Delnp>0$, \emph{i.e.,} $\Pr(Y>0|C=\A) \neq \Pr(Y>0|C=\B)$. Throughout the paper, when discussing the static case, we assume (without loss of generality) that individuals in group $\A$ are relatively more beneficial to the institutions than individuals in group $\B$, i.e.
\begin{align}
    \Pr(Y>0|C=\A) > \Pr(Y>0|C=\B)\label{assumptione}
\end{align}
If this is not the case, we can swap the names for groups $\A$ and $\B$.

Unlike $\lambda=0$, the optimal policy is not necessarily unique when $\lambda>0$. This property and some other elementary properties of optimal policies of the institution are discussed in Lemma \ref{lem1} in Appendix \ref{app:proof}. One important property to highlight here is that for any $\lambda$ and $g(\cdot)$, under any optimal policy, the advantaged group never has a smaller selection rate, i.e.,
\[\Sc{C=\A} \geq \Sc{C=\B}. \]
Consequently,  under any optimal policy, \eqref{defDelta} can be rewritten as
\begin{align}\label{defDelta2}
\F = \Pr(D=1|C=\A) - \Pr(D=1|C=\B).
\end{align}

We begin by considering when the penalty is effective meaning it will force the institution to abandon the utility-maximizing policy and reduce disparity in selection across groups.

\begin{definition}As above, assume $\Delnp>0$.
    We call a penalty, given by function $g(\cdot)$ and scalar $\lambda$, effective when every optimal solution of the problem in \eqref{eq:prob} satisfies
    \begin{align*}
        \Delta=
    |\Sc{ C=\A} - \Sc{C=\B}| <\Delnp= \Pr(Y > 0| C=\A) - \Pr(Y > 0| C=\B).
    \end{align*}
\end{definition}

The next theorem states the condition to ensure the effectiveness of the penalty which is dependent on the size of the groups, possible qualifications $\mathcal{Y}$, and the bias present in the population.

\begin{theorem}
    \label{th:effective}
   The penalty function $g(\cdot)$ and scalar $\lambda$ are effective if and only if
    $\beta_e < \lambda \cdot g'_-(\Delnp)$,
    where $g'_-(\cdot)$ is the  left-hand derivative of $g$
$$g'_-(x)=\lim_{h \mathop \to 0^-} \frac {g(x + h) - g(x)} h$$
    and
$$\beta_e = \min_{(y,c)\in\mathcal{T}_e} p(c)\cdot|y|$$
where $\mathcal{T}_e\subset\mathcal{Y}\times\{\A,\B\}$ is defined as
\begin{align*}
\mathcal{T}_e=\{(y,\A): y>0, p(y|C=\A)>0\}\cup \{(y,\B): y<0, p(y|C=\B)>0\}.
\end{align*}
\end{theorem}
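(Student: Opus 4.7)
The plan is to reduce effectiveness to a cleaner question: \emph{is the unconstrained policy, which selects an individual iff $y>0$, still profit-optimal?} Since the assumption $y\neq 0$ makes this policy the unique maximizer of $\E[DY]$ (modulo null $(c,y)$-pairs), any other policy with $\Delta=\Delnp$ is strictly sub-optimal (same penalty, strictly lower utility). Thus ``effective'' is equivalent to ``the unconstrained policy is not profit-optimal.''

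The core step is a bookkeeping inequality relating the utility gap to the disparity reduction. Fix a policy $\pi$ with $\Sc{C=\A}\ge\Sc{C=\B}$, which Lemma~\ref{lem1} yields at any optimum. For each $(y,c)\in\mathcal{T}_e$ let
\[
\xi(y,\A):=p(y|\A)\bigl(1-p(D=1|\A,y)\bigr)\ \text{for }y>0,\qquad
\xi(y,\B):=p(y|\B)\,p(D=1|\B,y)\ \text{for }y<0.
\]
Each $\xi(y,c)$ contributes $p(c)|y|\,\xi(y,c)$ to $\E[Y_+]-\E[DY]$ and decreases $\Sc{C=\A}-\Sc{C=\B}$ by $\xi(y,c)$, while the ``unhelpful'' deviations (selecting an $\A$-individual with $y<0$ or rejecting a $\B$-individual with $y>0$) add non-negative terms to the utility gap but \emph{increase} $\Sc{C=\A}-\Sc{C=\B}$. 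Hence $\Delnp-\Delta\le \sum_{(y,c)\in\mathcal{T}_e}\xi(y,c)$, and using $p(c)|y|\ge\beta_e$ on $\mathcal{T}_e$ yields the key bound
\[
\E[DY]\;\le\;\E[Y_+]-\beta_e(\Delnp-\Delta).
\]

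For the direction ``$\beta_e\ge \lambda g'_-(\Delnp)$ implies not effective,'' combine this bound with the convexity estimate $g(\Delnp)-g(\Delta)\le g'_-(\Delnp)(\Delnp-\Delta)$ for $\Delta\le\Delnp$ (valid because $g'$ is non-decreasing). This gives $\E[DY]-\lambda g(\Delta)\le \E[Y_+]-\lambda g(\Delnp)$, so no policy strictly beats the unconstrained one; policies with $\Delta>\Delnp$ are ruled out since both the utility is strictly lower and the penalty is weakly larger. For the converse, pick $(y^*,c^*)\in\mathcal{T}_e$ attaining $\beta_e$ and perturb the unconstrained policy at this single point by a small $\eta>0$, which yields a decrease $\tau:=\eta\,p(y^*|c^*)$ in $\Delta$, an exact drop of $\beta_e\tau$ in utility, and a penalty saving of $\lambda\bigl(g(\Delnp)-g(\Delnp-\tau)\bigr)$. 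Since $\bigl(g(\Delnp)-g(\Delnp-\tau)\bigr)/\tau \to g'_-(\Delnp)$ as $\tau\to 0^+$, the assumption $\lambda g'_-(\Delnp)>\beta_e$ makes the net profit change positive for sufficiently small $\tau$; the unconstrained policy is therefore not optimal and the penalty is effective.

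The main obstacle I expect is making the second paragraph precise: correctly handling the absolute value in the definition of $\Delta$ (resolved via Lemma~\ref{lem1}) and cleanly separating the helpful deviations in $\mathcal{T}_e$ from the unhelpful ones, so that the bound $\E[Y_+]-\E[DY]\ge \beta_e(\Delnp-\Delta)$ is tight to first order on the single-point perturbation used for the converse; any slack here would produce a gap between the sufficient and necessary thresholds.
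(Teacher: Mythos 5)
Your proof is correct and follows essentially the same route as the paper's: the paper's Lemma \ref{lemma-represent} performs the same reparametrization you call $\xi(y,c)$ (there denoted $z_{yc}$, made an exact identity by using part 4 of Lemma \ref{lem1} to zero out the ``unhelpful'' deviations at any optimum), and the two directions are then handled exactly as you do — the convexity bound $g(\Delnp-s)\ge g(\Delnp)-g'_-(\Delnp)\,s$ to show the unconstrained policy remains optimal when $\beta_e\ge\lambda g'_-(\Delnp)$, and a single-point perturbation at the $\beta_e$-achieving pair $(y^*,c^*)$ for the converse. Your replacement of the exact change of variables by a one-sided inequality is a harmless simplification that does not change the argument.
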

Note that \eqref{assumptione} guarantees that the set $\mathcal{T}_e$ is non-empty. Proof of the above theorem can be found in Appendix \ref{appndxproofth:effective}.

\begin{remark}
 Observe that
$\beta_e$ depends on the distribution of qualifications across groups $p(y|c)$ only through its support set, i.e., only the set of $y$ where $p(y|c)>0$ is important and not the exact values of $p(y|c)$.
Next, note that effectiveness only depends on the derivative of the discrimination penalty at $\Delnp$. The choice of $g(\cdot)$ is also important. For instance, consider two functions $g_1(x)=x$ and $g_2(x)=x^2$. As $g_1(x)$ has a constant derivative, the threshold of $\lambda$ for effectiveness is found by $\beta_e < \lambda$ which is independent of $\Delnp$. But in the case of the square function $g_2(x)$, the threshold will be $\frac{\beta_e}{2\Delnp} < \lambda$,  so the current level of bias in the population affects the effectiveness of the penalty. Therefore assessing the effectiveness of the penalty becomes more complex. Also, in an evolving population (as in the next section) the effectiveness may change over time.
\end{remark}

Next, we discuss the possibility of reaching complete demographic parity, $\Sc{ C=\A} = \Sc{C=\B}$. Note that the institution always has the option of adopting a policy that satisfies complete demographic parity, i.e.,
\begin{align}
\max_{p(d|c,y)}\; \E[DY] - \lambda \cdot g(\F)\geq &\max_{
\substack{p(d|c,y):  \\ p(D=1|C=\A) = p(D=1|C=\B)}
}\; \E[DY].
\end{align}
We are interested in knowing when the discrimination penalty is strong enough to ensure complete demographic parity in the institution's policy. This concept is formalized in the definition below:
\begin{definition}
    We call a penalty, given by function $g(\cdot)$ and scalar $\lambda$, fully satisfactory when at least one optimal solution of the problem in \eqref{eq:prob} satisfies
    \[\Delta=
    |\Sc{ C=\A} - \Sc{C=\B}|=0.
    \]
    Equivalently,
    \begin{align}
\max_{p(d|c,y)}\; \E[DY] - \lambda \cdot g(\F)=&\max_{
\substack{p(d|c,y):  \\ p(D=1|C=\A) = p(D=1|C=\B)}
}\; \E[DY].
\end{align}
\end{definition}
We have the following result:
\begin{theorem}
\label{th:fulldp}
For a given distribution $p(y,c)$, if $\Delnp > 0$ then, the penalty function $g(\cdot)$ and scalar $\lambda$ are fully satisfactory if and only if
    $\beta_s \leq \lambda \cdot g'_+(0) $,
where
$g'_+(\cdot)$ is the  right-hand derivative of $g$:
$$g'_+(x)=\lim_{h \mathop \to 0^+} \frac {g(x + h) - g(x)} h$$
$\beta_s$ is defined as follows:
Consider the problem of finding the best policy that satisfies demographic parity:
\begin{align} \label{eq:demo-parity}
\max_{
\substack{p(d|c,y):  \\ p(D=1|C=\A) = p(D=1|C=\B)}
}\; \E[DY].
\end{align}
Let $p^*(d|c,y)$ be any optimal solution to the above optimization (if there is more than one optimal policy, we can choose any one of them). Then, let
$\mathcal{T}_s\subset\mathcal{Y}\times\{\A,\B\}$ be defined as
\begin{align*}
\mathcal{T}_s=&\{(y,\A): y>0, p(y|\A)>0,p^*(D=1|y,\A) < 1\}
\cup \{(y,\B): y<0, p(y|\B)>0,p^*(D=1|y,\B) > 0\}.
\end{align*}
Finally, set
$$\beta_s = \max_{(y,c)\in\mathcal{T}_s} p(c)\cdot|y|.$$
It turns out that the value of $\beta_s$ will be the same if we start from different optimal solutions of \eqref{eq:demo-parity}.
\end{theorem}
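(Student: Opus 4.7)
The plan is to parameterize policies by their group-wise selection rates $(\tau_\A,\tau_\B) = (\Pr(D=1|C=\A), \Pr(D=1|C=\B))$ and reduce to a one-dimensional concave analysis. For fixed marginals, the maximum of $\E[DY]$ is attained by greedy within-group selection (largest $y$ first), yielding $U(\tau_\A,\tau_\B) = p(\A)\, u_\A(\tau_\A) + p(\B)\, u_\B(\tau_\B)$ where each $u_c$ is concave and piecewise linear. An optimal demographic-parity policy $p^*$ corresponds to a $\tau^*$ maximizing $U(\tau,\tau)$, and by the greedy structure the set $\mathcal{T}_s$ collects exactly the ``boundary/fractional'' coordinates of $p^*$. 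I would observe that the one-sided derivatives encode the $\mathcal{T}_s$ contributions: $p(\A)\, u'_{\A,+}(\tau^*) = p(\A)\cdot y^+_\A$ for the largest $y^+_\A$ with $p^*(D{=}1|y^+_\A,\A) < 1$ (if one exists), and $-p(\B)\, u'_{\B,-}(\tau^*) = p(\B)\cdot |y^-_\B|$ for the smallest $y^-_\B$ with $p^*(D{=}1|y^-_\B,\B) > 0$; thus $\beta_s$ is the maximum of the non-vacuous one of these.

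The core technical step is the ``tangent bound''
\begin{align*}
U(\tau_\A,\tau_\B) - U(\tau^*,\tau^*) \leq \beta_s \cdot |\tau_\A - \tau_\B| \qquad \text{for every } (\tau_\A,\tau_\B).
\end{align*}
By Lemma~1 I may take $\tau_\A \geq \tau_\B$. First, apply concavity of $u_\A$ and $u_\B$ to linearize at $\tau^*$, choosing the right-handed subderivative for an argument above $\tau^*$ and the left-handed one below. Then invoke the KKT optimality condition for $\tau^*$, namely $p(\A)\,u'_{\A,+}(\tau^*) + p(\B)\,u'_{\B,+}(\tau^*) \leq 0 \leq p(\A)\,u'_{\A,-}(\tau^*) + p(\B)\,u'_{\B,-}(\tau^*)$, to absorb the symmetric $(\tau - \tau^*)$ contributions, leaving a remainder at most $\beta_s(\tau_\A - \tau_\B)$. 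A case split on $\tau^* \geq \tau_\A \geq \tau_\B$, $\tau_\A \geq \tau^* \geq \tau_\B$, and $\tau_\A \geq \tau_\B \geq \tau^*$ covers every configuration.

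Given the tangent bound, both directions follow quickly. For ($\Leftarrow$), if $\beta_s \leq \lambda g'_+(0)$ then for any policy $p'$ with rates $(\tau_\A,\tau_\B)$ and $\Delta = |\tau_\A - \tau_\B|$,
\begin{align*}
\E_{p'}[DY] - \lambda g(\Delta) \leq U(\tau_\A,\tau_\B) - \lambda g(\Delta) \leq U(\tau^*,\tau^*) + \beta_s \Delta - \lambda g'_+(0)\Delta \leq U(\tau^*,\tau^*),
\end{align*}
using the tangent bound together with the convexity estimate $g(\Delta) \geq g'_+(0)\Delta$ (valid since $g(0)=0$). Hence $p^*$ is optimal and $\Delta(p^*)=0$, so the penalty is fully satisfactory. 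For ($\Rightarrow$), if $\beta_s > \lambda g'_+(0)$, pick $(y^*,c^*) \in \mathcal{T}_s$ attaining $\beta_s$ and perturb $p^*$ at that single coordinate by a small $\delta > 0$ (increasing selection if $c^*=\A$, decreasing if $c^*=\B$). The utility gain equals $\beta_s\cdot\Delta$ while the penalty equals $\lambda g'_+(0)\Delta + o(\Delta)$ via the right-derivative expansion $g(\Delta) = g'_+(0)\Delta + o(\Delta)$; the net profit change $(\beta_s - \lambda g'_+(0))\Delta - o(\Delta)$ is strictly positive for small enough $\delta$, contradicting optimality of any $\Delta=0$ policy.

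The main obstacle I anticipate is twofold: (i) carefully tracking which one-sided subderivative to use in the tangent-bound argument across the various relative positions of $\tau_\A,\tau_\B,\tau^*$, especially in edge cases where $\tau^*$ coincides with $\Pr(Y>0|\A)$ or $\Pr(Y>0|\B)$ so that one half of $\mathcal{T}_s$ is empty (the assumption $\Delnp > 0$ ensures at least one half is non-empty, so $\beta_s$ is well-defined); and (ii) verifying the closing claim that $\beta_s$ is independent of the chosen $p^*$. The latter reduces to noting that non-uniqueness of $p^*$ arises only when the set of optimal $\tau^*$'s forms an interval on which $U(\tau,\tau)$ is affine, and on such an interval the one-sided derivatives of $u_\A$ and $u_\B$ appearing in $\beta_s$ are forced to be constant.
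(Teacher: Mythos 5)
Your proposal is correct, and while it shares the paper's overall architecture --- both directions hinge on showing that deviating from the demographic-parity optimum buys at most $\beta_s$ units of utility per unit of disparity, combined with the convexity bound $g(\Delta)\geq g'_+(0)\Delta$ for sufficiency and a single-coordinate perturbation using $g(\epsilon)=g'_+(0)\epsilon+o(\epsilon)$ for necessity --- your route to the central inequality is genuinely different. The paper first rewrites the problem as a linear program in variables $z_{yc}$ indexed by $\mathcal{T}_e$ (Lemma \ref{lemma-represent}), proves a threshold structure for DP-optimal solutions, namely $p(c)|y|>\beta_s\Rightarrow z_{yc}=0$ and $p(c)|y|<\beta_s\Rightarrow z_{yc}=p(y|c)$ (Lemma \ref{lemmanew4}), and then deduces the coordinate-wise bound $(z'_{yc}-z_{yc})\,p(c)|y|\geq(z'_{yc}-z_{yc})\,\beta_s$ by an exchange argument. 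You instead aggregate to the two selection rates $(\tau_\A,\tau_\B)$, use concavity of the greedy per-group utility curves $u_c$, and obtain the same bound from supergradient inequalities plus the first-order condition at $\tau^*$; your tangent bound $U(\tau_\A,\tau_\B)-U(\tau^*,\tau^*)\leq\beta_s|\tau_\A-\tau_\B|$ is precisely the paper's inequality read in rate coordinates. Your version is more geometric and makes the role of $\beta_s$ as a one-sided Lipschitz constant transparent; the paper's version is more mechanical but yields the independence of $\beta_s$ from the choice of $p^*$ as a clean by-product of the threshold lemma. Two details to tighten when writing yours out: (i) when one half of $\mathcal{T}_s$ is empty you still need $p(\A)u'_{\A,+}(\tau^*)\leq\beta_s$ and $-p(\B)u'_{\B,-}(\tau^*)\leq\beta_s$ in the tangent-bound cases --- these hold because the corresponding derivative is then negative while $\beta_s>0$, but it must be stated; and (ii) in your uniqueness argument, the outward one-sided derivatives at the endpoints of the optimal interval of $\tau^*$ need not equal the interior slopes $m_\A,m_\B$; what saves the claim is that the inward ones do, and the maximum defining $\beta_s$ is always attained by an inward derivative, so $\beta_s=p(\A)m_\A=-p(\B)m_\B$ at every optimal $\tau^*$.
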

Proof of the above theorem can be found in Appendix \ref{appndxproofth:fulldp}.

In this case, the condition depends on the policy with demographic parity constraint. Consequently, employing this policy to assess a discrimination penalty is not as straightforward as in the previous theorem. Nevertheless, it underscores important implications in selecting the penalty function $g(\cdot)$. Following our comparison of the functions $g_1(x) = x$ and $g_2(x) = x^2$ in the context of Theorem \ref{th:effective}, note that for the square function, we have $g'_+(0) = 0$. Consequently, for any finite value of $\lambda$, the condition $\beta_s > \lambda \cdot 0$ is satisfied, thereby making it impossible to guarantee complete demographic parity, except asymptotically as $\lambda$ tends to infinity. However, when considering $g_1(x)=x$, the condition for achieving complete demographic parity simplifies to $\beta_s < \lambda$, indicating the existence of some finite level of penalty sufficient to achieve demographic parity. Once again, this highlights a distinction between these two penalty functions.

\section{Dynamics of the Population} \label{sectiondynamics}
After discussing fairness regulations at a static, single point in time, we now shift our focus towards understanding how the population may dynamically evolve over time in response to the selection policy. Real-world instances of such transformations are varied, ranging from individuals investing in their attributes to obtain improved future outcomes, to situations where receiving a loan based on one's credit score can further enhance that score.

We assume that individuals' group membership does not change over time. However, qualifications are states that individuals can acquire and may change over time. Extending our notation from the previous section, we denote a randomly chosen individual's qualification at time $t$ by $Y_t$. We assume that the institution makes a decision about the individual at every instance $t$ and denote it by the random variable $D_t$.
The decision at time $t$ ($D_t$) has the potential to alter the qualifications of individuals in the next time step. We make the following Markov property assumption governing these changes:
\begin{align}\label{MCassumption}
p(y_{t+1} | y_1, \cdots, y_t, d_1, \cdots, d_t, c) = p(y_{t+1} | y_t, d_t)
\end{align}
We borrow the above assumption from \cite{mozannar_fair_2020,zhang_how_2020} and \cite{raab_unintended_2021}. Observe that this assumption implies that the new qualification $Y_{t+1}$ depends only on the current qualification $Y_t$ and the current decision by the institution $D_t$, but not on group $C$ or past history. Additionally, following the same literature, we assume that the conditional probability $\Pr(Y_{t+1} = y | Y_t=y', D_t)$ does not change over time, i.e.,
\begin{gather*}
    \Pr(Y_{t+1} = y| Y_t = y', D_t = d) = q(y|y',d)
\end{gather*}
for some conditional distribution $q(y|y',d)$ that does not depend on time $t$.

Note that the decision variable $D_t$ depends not only on the individual's qualification $Y_t$, but possibly also on the population \emph{distribution} of qualifications across the society, i.e., the joint distribution of $(C,Y_t)$. In other words, $p(d_t|c,y_t)$ may depend on the distribution $p(c,y_t)$ in order to ensure fairness across the society. Consequently, the qualification of a randomly selected individual over time $\{Y_t\}_{t=1,2,\cdots}$ does not form a Markov chain despite the Markov assumption in \eqref{MCassumption}.

In this section, we examine the institution's adoption of myopic policies, which maximize the instantaneous objective discussed in previous sections at every time step.
In other words, we assume that the institution solves the following optimization problem in each step $t$:
\begin{gather}
\maximize_{p(d_t|c,y_t)}\; \E[D_tY_t] - \lambda \cdot g(\F_t)\label{eqnformulationd}
\end{gather}
where
 \[\Delta_t=
    |\Sct{ C=\A} - \Sct{C=\B}|.
    \]
If there are multiple maximizers $p(d_t|c,y_t)$, the institution may choose any of them.
Note that, as in the previous section, the policy of the institution at time $t$ depends only on the marginal distribution of $C$ and $Y_t$.
So, the state of the system at each time step $t$ is determined by the distribution of qualifications of each group, denoted by $p_{Y_t|\A}$ and $p_{Y_t|\B}$, as well as the choice of maximizer if the optimization problem in \eqref{eqnformulationd} has more than one maximizer. We call a state stationary when it does not change in the next time step.

\begin{definition}\label{def3}
    For an instance of the system, given by group sizes $\Pr(C=\B)$ and $\Pr(C=\B)$, discrimination penalty $g(\cdot)$ and $\lambda$ and the conditional distribution $q(y| y',d)$, a state $(p_{Y|\A}, p_{Y|\B}, p_{D|C,Y})$ is stationary if starting from $p_{Y_t|\A}=p_{Y|\A}, p_{Y_t|\B}=p_{Y|\B}$, we have that
    $p_{D|C,Y}$ is a maximizer for the optimization problem in \eqref{eqnformulationd} and the state does not change in the next time step:
    \begin{align*}
       p_{Y_t|\A} = p_{Y_{t+1}|\A}, \\
       p_{Y_t|\B} = p_{Y_{t+1}|\B}.
    \end{align*}
\end{definition}

Stationary states are interesting because if the system reaches them at some point in time, it will not move to any other new state in the future.

Within the literature exploring societal dynamics and algorithmic decision-making, social equality between groups within a population is often characterized by the condition $\E[Y_t|C=\A]  = \E[Y_t|C=\B]$ (\cite{mozannar_fair_2020,zhang_how_2020,raab_unintended_2021}). This condition alone does not guarantee equal selection outcomes under a utility-maximizing policy (unless $Y_t$ is a binary random variable). More generally, in this paper, we study the
joint distribution of $(C, Y_t)$. The statements we prove can be used to study the difference between $ \E[Y_t|C=\A] $ and $ \E[Y_t|C=\B]$.

Our goal is to study the impact of $\lambda$ and penalty function $g(\cdot)$ on the limit of
\[\lim_{t\rightarrow\infty}\Delta_t=
\lim_{t\rightarrow\infty}|\Sct{ C=\A} - \Sct{C=\B}|.
\]

First, we show that under some mild assumptions, if $\lambda=0$ in \eqref{eqnformulationd}, the amount of discrimination $\Delta_t$ vanishes over time. In other words, even when we do not apply any discrimination penalty, both groups will end up with the same qualification distribution in the long run. This is because we use the same update rule $q(y|y',d)$ for qualifications across different groups (the new qualification $Y_{t+1}$ depends only on the current qualification $Y_t$ and the current decision by the institution $D_t$, \emph{but not on the individual's group identity $C$}). In other words, equal opportunities for growth or decline are provided to different groups. In this scenario, Theorem \ref{th:um} below informs us that even if there is initially a disparity in qualifications, which implies uneven selection under a policy that maximizes utility, equal opportunities for change will eventually result in equal distributions across the groups in the long run.

On the other hand, Theorem \ref{th:um} below shows that the amount of discrimination $\Delta_t$ may not vanish over time when $\lambda>0$. This is rather surprising because by choosing a positive $\lambda$, we penalize discrimination at each step. However, we observe that adopting such a myopic policy in some cases might result in non-vanishing discrimination in the long run, i.e., $\lim_{t\rightarrow\infty}\Delta_t>0$. Furthermore, it is possible that $\lim_{t\rightarrow\infty}\Delta_t>\Delta_0$, i.e., the limiting discrimination value may be higher than the original discrimination in the society at time $t=0$.

\begin{theorem}
\label{th:um}
Assume that $q(y|y',d)>0$ for all $y,y'\in\mathcal{Y}$ and $\mathcal{Y}$ is a finite set. Then, if $\lambda=0$, we have
\[\lim_{t\rightarrow\infty}\Delta_t=0.
    \]
    More generally, $C$ and $Y_t$ will become asymptotically independent:
    \[
\lim_{t\rightarrow\infty}\|p_{C,Y_t}-p_{C}p_{Y_t}\|_1=0
    \]
    where $\|\cdot\|_1$ is the total variation distance. However, one can find examples for $\lambda>0$ in which
    \[\lim_{t\rightarrow\infty}\Delta_t>\Delta_0>0.
    \]
\end{theorem}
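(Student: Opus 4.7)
\textbf{Proof plan for Theorem \ref{th:um}.}

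For the first claim, the plan is to reduce the evolution of each group's qualification to a single, group-independent Markov chain on $\mathcal{Y}$ and then invoke the standard convergence theorem. The key observation is that when $\lambda = 0$, the optimizer of \eqref{eqnformulationd} is the deterministic rule $D_t = \mathbf{1}[Y_t>0]$, which depends only on $Y_t$ and not on $C$; the paper has already noted this uniqueness in Section \ref{sec:effstat}. Conditioning on $C=c$, the sequence $(Y_t)_{t\geq 0}$ is therefore a Markov chain with transition matrix $P(y'|y)=q(y'|y,\mathbf{1}[y>0])$, and this kernel is the same for both $c=\A$ and $c=\B$. The hypothesis $q(\cdot|\cdot,\cdot)>0$ makes $P$ entrywise positive on the finite set $\mathcal{Y}$, hence irreducible and aperiodic with a unique stationary distribution $\pi$; standard geometric convergence gives $p_{Y_t|C=c}\to \pi$ in total variation for both values of $c$. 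The identity $p_{C,Y_t}(c,y)-p_C(c)p_{Y_t}(y)=p_C(c)\bigl(p_{Y_t|c}(y)-p_{Y_t}(y)\bigr)$ combined with the triangle inequality then yields $\|p_{C,Y_t}-p_C p_{Y_t}\|_1\to 0$, which in particular forces $\Delta_t\to 0$.

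For the second claim, the plan is to exhibit a small concrete instance in which the myopic policy drives the system to a stationary state whose disparity exceeds $\Delta_0$. I would take a finite $\mathcal{Y}$ with at least three points, fix group sizes $p(\A),p(\B)$ and initial marginals $p_{Y_0|\A},p_{Y_0|\B}$ with $\Delta_0>0$, and design a group-independent kernel $q(y|y',d)$ whose action differs sharply between $d=1$ and $d=0$, for example selection preserving or promoting high qualification while non-selection causes degradation. For a suitable $\lambda>0$ and penalty $g$, Theorem \ref{th:effective} identifies when the penalty forces the institution away from its $\lambda=0$ rule; under the induced myopic policy the qualified-but-unselected individuals in the advantaged group $\A$ degrade, while the limited pool of qualified individuals in $\B$ restricts how fast $\B$'s rate can rise. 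I would then track the deterministic dynamical system $t\mapsto(p_{Y_t|\A},p_{Y_t|\B})$, either iterating it numerically or locating an interior fixed point of the myopic update map in the sense of Definition \ref{def3} and verifying local stability, and finally check by direct computation that the limiting disparity strictly exceeds $\Delta_0$.

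The first claim is a routine Markov-chain argument once one notices the group-independence of the $\lambda=0$ optimal policy. The substantive difficulty lies in the second claim: because the kernel $q$ is itself group-independent, it alone (as Part 1 shows) homogenizes the two conditional distributions, so the penalty-induced distortion of the policy must be strong enough to overcome this homogenization and to amplify, rather than merely preserve, the initial disparity. Calibrating $\mathcal{Y}$, the initial distributions, $q$, $\lambda$, and $g$ so that a stationary state with limiting disparity strictly greater than $\Delta_0$ exists and is reached, and verifying this analytically in closed form, is the main technical hurdle.
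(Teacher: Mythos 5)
Your argument for the $\lambda=0$ part is correct and is essentially the paper's own proof: the unique utility-maximizing policy $D_t=\mathbf{1}[Y_t>0]$ is group-independent, so each conditional $p_{Y_t|C=c}$ evolves under the single kernel $q^*(y|y')=q(y|y',\mathbf{1}[y'>0])$, which is entrywise positive, hence irreducible and aperiodic with a unique stationary law to which both conditionals converge; the factorization $p_{C,Y_t}(c,y)-p_C(c)p_{Y_t}(y)=p_C(c)\bigl(p_{Y_t|c}(y)-p_{Y_t}(y)\bigr)$ then finishes the claim. No issues there.

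The gap is in the second claim. The theorem asserts the \emph{existence} of an instance with $\lim_{t\to\infty}\Delta_t>\Delta_0>0$, so the proof must actually exhibit and verify one; your text only describes how you \emph{would} go about constructing it and explicitly defers the calibration, which you yourself identify as the main technical hurdle. The paper discharges this by a fully explicit example (Example \ref{example1}): $\mathcal{Y}=\{-2,-1,+2\}$, equal group sizes, $g(x)=x$, $\lambda=0.7$, and the kernel of Figure \ref{fig:tran}, in which selection from state $-1$ \emph{raises} the probability of degrading to $-2$ (the ``lack of motivation'' mechanism) --- note this is the opposite sign convention from the ``selection promotes growth'' dynamic you sketch, and that choice matters for making the advantaged group's mass pile up at the top while the disadvantaged group's does not. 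With initial marginals $[0.3,0.1,0.6]$ and $[0.5,0.1,0.4]$ one has $\Delta_0=0.1$; the myopic optimizer selects only $+2$ from $\A$ but $\{-1,+2\}$ from $\B$, this policy remains the optimizer at every subsequent step (a point that must itself be checked, since the optimizer could in principle switch as the marginals drift), the two groups therefore evolve under two \emph{fixed} but different linear kernels, and their limits are $[1/3,1/10,17/30]$ and $[17/30,1/10,1/3]$, giving $\lim_t\Delta_t=4/30>0.1$. Until you supply and verify such a concrete instance --- including the check that the chosen policy stays optimal along the whole trajectory --- the second claim is not proved.
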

In summary, while there may be a desire to penalize discrimination as a morally commendable action or to mitigate biases within the population, the above theorem suggests that this approach may, unfortunately, prevent achieving equality among the groups in the long run. We note that this negative impact is similar to what has been described (in different settings) by \cite{mozannar_fair_2020,zhang_how_2020} and \cite{raab_unintended_2021}, where the enforcement of certain fairness criteria in decision-making can prevent society from naturally attaining equality.

\begin{figure}
    \centering
    \begin{subfigure}{0.45\textwidth}
    \begin {tikzpicture}[-latex ,auto ,node distance =3 cm ,on grid ,
    semithick ,
    state/.style ={ circle ,draw}]
    \node[state] (A){$-1$};
    \node[state] (B) [above left=of A] {$-2$};
    \node[state] (C) [above right =of A] {$+2$};
    \path (A) edge [loop below] node[below] {$0.1$} (A);
    \path (B) edge [loop left] node[left] {$0.8$} (B);
    \path (C) edge [loop right] node[right] {$0.8$} (C);
    \path (A) edge [bend right =35] node[below =0.15 cm] {$0.1$} (C);
    \path (C) edge [bend left =-15] node[below =0.15 cm] {$0.1$} (A);
    \path (A) edge [bend left =35] node[below =0.15 cm] {$0.8$} (B);
    \path (B) edge [bend left =15] node[below =0.15 cm] {$0.1$} (A);
    \path (C) edge [bend left =15] node[below =0.15 cm] {$0.1$} (B);
    \path (B) edge [bend right = -15] node[above =0.15 cm] {$0.1$} (C);
    \end{tikzpicture}
        \caption{$D=1$}
    \end{subfigure}
    \begin{subfigure}{0.45\textwidth}
    \centering
    \begin {tikzpicture}[-latex ,auto ,node distance =3 cm ,on grid ,
    semithick ,
    state/.style ={ circle ,draw}]
    \node[state] (A){$-1$};
    \node[state] (B) [above left=of A] {$-2$};
    \node[state] (C) [above right =of A] {$+2$};
    \path (A) edge [loop below] node[below] {$0.1$} (A);
    \path (B) edge [loop left] node[left] {$0.8$} (B);
    \path (C) edge [loop right] node[right] {$0.8$} (C);
    \path (A) edge [bend right =35] node[below =0.15 cm] {$0.8$} (C);
    \path (C) edge [bend left =-15] node[below =0.15 cm] {$0.1$} (A);
    \path (A) edge [bend left =35] node[below =0.15 cm] {$0.1$} (B);
    \path (B) edge [bend left =15] node[below =0.15 cm] {$0.1$} (A);
    \path (C) edge [bend left =15] node[below =0.15 cm] {$0.1$} (B);
    \path (B) edge [bend right = -15] node[above =0.15 cm] {$0.1$} (C);
    \end{tikzpicture}
        \caption{$D=0$}
    \end{subfigure}
    \caption{Transition probabilities when a) getting selected and b) when not}
    \label{fig:tran}
\end{figure}
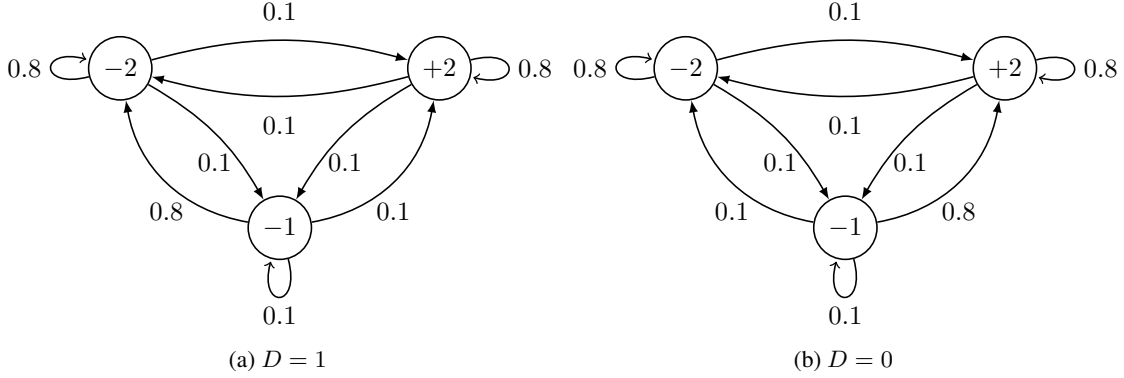

A detailed proof of Theorem \ref{th:um} is given in Appendix \ref{appndxproofth:util-max}. Below, we just provide a specific example that demonstrates the negative impact discussed in the statement of the theorem. This example is based on the "lack of motivation" concept identified by \cite{zhang_how_2020}, which refers to situations where institutional selection reduces the likelihood of qualification growth.
\begin{example}\label{example1}
We consider three possible qualifications: $-2$, $-1$ and $+2$, and their corresponding transition probabilities $q(y|y',d)$, are illustrated in Figure \ref{fig:tran}. Notably, the difference between the two decisions lies in the transition from state $-1$ to other states, where being selected increases the probability of experiencing a decline to qualification $-2$ and decreases the probability of growth.

We examine the groups $\A$ and $\B$ of equal size. Assuming a penalty function $g(x)=x$, we observe the impact of different levels of $\lambda$ on the disparity $\Delta$ over time, as illustrated in Figure \ref{fig:plot_discrimination}. The initial distribution of the two groups is as follows:
$p(Y_0=-2|\A)=0.3, p(Y_0=-1|\A)=0.1$, $p(Y_0=+2|\A)=0.6$ and
$p(Y_0=-2|\B)=0.5, p(Y_0=-1|\B)=0.1$, $p(Y_0=+2|\B)=0.4$
which we can show by the following vectors:
\begin{align*}
    p_{Y_0|\A} = [0.3, 0.1, 0.6],\\
    p_{Y_0|\B} = [0.5, 0.1, 0.4],
\end{align*}

When $\lambda$ falls within the range of $0$ to $0.5$ the penalty is not effective, as stated by Theorem \ref{th:effective}, and the disparity tends to decrease over time, approaching zero. On the other hand, when $\lambda$ exceeds $1$, the penalty is fully satisfactory, resulting in a constant disparity of zero. However, in the interval between 0.5 and 1, the penalty is effective but prevents equalization of qualifications and also increases the selection disparity. See Appendix \ref{appndxproofth:util-max} for further details.

\begin{figure}
    \centering
    \begin{tikzpicture}
    \begin{axis}[
      xlabel=Time,
      ylabel=$\Delta$,
      yticklabel style={
        /pgf/number format/fixed,
        /pgf/number format/precision=2
      }
    ]
    \addplot table [y=$00$, x=X]{data.dat};
    \addlegendentry{$\lambda = 0$}
    \addplot table [y=$07$, x=X]{data.dat};
    \addlegendentry{$\lambda = 0.7$}
    \addplot table [y=$15$, x=X]{data.dat};
    \addlegendentry{$\lambda = 1.5$}
    \end{axis}
    \end{tikzpicture}
    \caption{Evolution of disparity $\Delta$ under different levels of $\lambda$}
    \label{fig:plot_discrimination}
\end{figure}
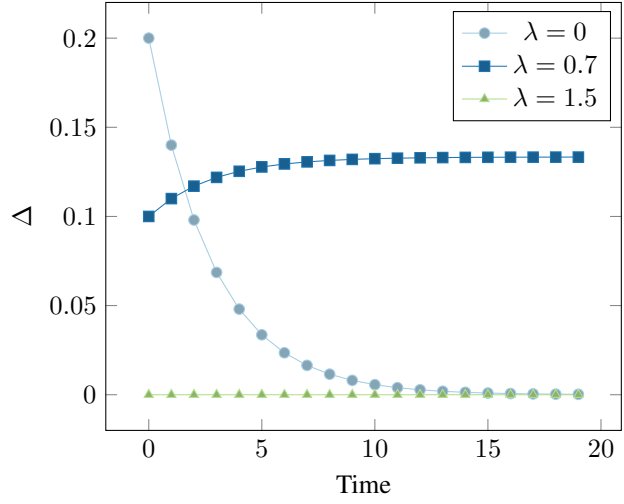

\end{example}

Observing the unintended consequence of the discrimination penalty makes it important to identify conditions that can prevent it. In the following theorem, we establish that if the transition probabilities $q(y|y',d)$ are sufficiently high, the disparity between groups will vanish regardless of the level of penalty imposed.

\begin{theorem}
    \label{th:penalty-conv}
    Define $\alpha = \min_{y,y',d} \{q(y|y',d)\}$,
    then for any $\lambda \geq 0$ and convex function $g(\cdot)$ we have
    \[
    \|p_{Y_{t+1}|\A}-p_{Y_{t+1}|\B}\|_1 \leq  2\left(1 - \alpha|\mathcal{Y}| \right)\|p_{Y_{t}|\A}-p_{Y_{t}|\B}\|_1.
    \]
    Specifically, $\frac{1}{2|\mathcal{Y}|}  < \alpha$ implies
    \[
\lim_{t\rightarrow\infty}\|p_{Y_{t+1}|\A}-p_{Y_{t+1}|\B}\|_1=0
    \]
    and
    \[
    \lim_{t\rightarrow\infty}\Delta_t=0.
    \]
\end{theorem}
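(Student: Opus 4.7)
The plan is to prove a one-step contraction
\[
\|p_{Y_{t+1}|\A}-p_{Y_{t+1}|\B}\|_1 \leq 2(1-\alpha|\mathcal{Y}|)\,\|p_{Y_{t}|\A}-p_{Y_{t}|\B}\|_1
\]
and then iterate. Unrolling the Markov update I write $p_{Y_{t+1}|C}(y)=\sum_{y',d} q(y|y',d)\,r_C(y',d)$ with $r_C(y',d):=p_t(d|C,y')\,p_{Y_t|C}(y')$, so that $p_{Y_{t+1}|\A}-p_{Y_{t+1}|\B}$ is the push-forward through $q$ of the zero-sum signed measure $r_\A-r_\B$ on $\mathcal{Y}\times\{0,1\}$.

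Next I run a Doeblin/Dobrushin contraction. Because $q(y|y',d)\geq\alpha$ everywhere and $\sum_{y',d}(r_\A-r_\B)(y',d)=0$, subtracting the constant $\alpha$ from $q$ inside the sum leaves it unchanged; pulling absolute values and summing over $y$ gives
\[
\|p_{Y_{t+1}|\A}-p_{Y_{t+1}|\B}\|_1 \leq (1-\alpha|\mathcal{Y}|)\,\|r_\A-r_\B\|_1.
\]
It then remains to show $\|r_\A-r_\B\|_1 \leq 2\|p_{Y_t|\A}-p_{Y_t|\B}\|_1$. The natural identity
\[
p_t(d|\A,y')p_{Y_t|\A}(y')-p_t(d|\B,y')p_{Y_t|\B}(y') = p_t(d|\A,y')\bigl[p_{Y_t|\A}(y')-p_{Y_t|\B}(y')\bigr] + p_{Y_t|\B}(y')\bigl[p_t(d|\A,y')-p_t(d|\B,y')\bigr],
\]
followed by absolute values and summation, yields
\[
\|r_\A-r_\B\|_1 \leq \|p_{Y_t|\A}-p_{Y_t|\B}\|_1 + \sum_{y'} p_{Y_t|\B}(y')\,\|p_t(\cdot|\A,y')-p_t(\cdot|\B,y')\|_1.
\]

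The hard part will be the policy-Lipschitz inequality
\[
\sum_{y'} p_{Y_t|\B}(y')\,\|p_t(\cdot|\A,y')-p_t(\cdot|\B,y')\|_1 \leq \|p_{Y_t|\A}-p_{Y_t|\B}\|_1,
\]
which must hold at \emph{any} maximizer of \eqref{eqnformulationd}. To prove it I would exploit the layered structure of the instantaneous problem: for any fixed pair of selection rates $\mu_c=\Pr(D_t=1|C=c)$, the inner problem is linear in $p_t(d|c,\cdot)$, so every within-group maximizer is a threshold policy on $y$ with at most one randomized value at the threshold; the rates $(\mu_\A,\mu_\B)$ themselves arise from the outer convex program $\max\sum_c \Pr(C=c)V_c(\mu_c)-\lambda g(|\mu_\A-\mu_\B|)$, where $V_c$ is the concave inner value. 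Combining the KKT conditions of this outer program with the monotonicity $\tilde\pi_\A(y)\geq\tilde\pi_\B(y)$ whenever the advantaged group is selected at the higher rate (a property of optimal policies already noted in the paper before Theorem~\ref{th:effective}) should bound the weighted policy discrepancy by the distributional discrepancy. This is the chief difficulty because the estimate has to be robust to the arbitrary tie-breaking allowed in Definition~\ref{def3}.

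Combining the three ingredients delivers the claimed contraction. When $\alpha>1/(2|\mathcal{Y}|)$ the factor $2(1-\alpha|\mathcal{Y}|)$ is strictly less than one, so iterating yields geometric decay $\|p_{Y_t|\A}-p_{Y_t|\B}\|_1\to 0$. Finally, since $\Delta_t=|\sum_y p_t(D_t=1|y,\A)p_{Y_t|\A}(y)-\sum_y p_t(D_t=1|y,\B)p_{Y_t|\B}(y)|$ is controlled, via the same policy-Lipschitz estimate, by $\|p_{Y_t|\A}-p_{Y_t|\B}\|_1$, the conclusion $\Delta_t\to 0$ follows.
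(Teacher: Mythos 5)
Your first step is exactly the paper's: write $p_{Y_{t+1}|C}(y)=\sum_{y',d}q(y|y',d)\,r_C(y',d)$, use that $r_\A-r_\B$ sums to zero to subtract $\alpha$ from $q$, and conclude $\|p_{Y_{t+1}|\A}-p_{Y_{t+1}|\B}\|_1\leq(1-\alpha|\mathcal{Y}|)\|r_\A-r_\B\|_1$. The gap is in the second half. The entire content of the theorem beyond the Doeblin step is the bound $\|r_\A-r_\B\|_1\leq 2\|p_{Y_t|\A}-p_{Y_t|\B}\|_1$, and you reduce it (via the product-rule decomposition) to the ``policy-Lipschitz'' inequality, which you then do not prove: you only gesture at KKT conditions of an outer concave program and at threshold monotonicity. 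That inequality is the crux, and the sketch as given would not compile into a proof. In particular, the KKT/outer-program framing does not by itself control the amplification factor $p_{Y_t|\B}(y')/p_{Y_t|\A}(y')$ that appears in your second term for $y'>0$ (where $\Pr(D_t=1|y',\B)=1$ but $\Pr(D_t=1|y',\A)<1$); a priori this ratio can be arbitrarily large, so the inequality is not a soft consequence of optimality and needs a genuine argument.

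For the record, your policy-Lipschitz inequality does appear to be true, but proving it requires precisely the structural facts from Lemma \ref{lem1} that the paper's proof uses directly: parts 3 and 4 give $\Pr(D_t=1|y',\B)=1$ for $y'>0$, $\Pr(D_t=1|y',\A)=0$ for $y'<0$, and the rate ordering, whence (in the $z$-representation) your second term equals $2\sum_{y'>0}\frac{p_{Y_t|\B}(y')}{p_{Y_t|\A}(y')}z_{y'\A}+2\sum_{y'<0}z_{y'\B}$; bounding $\frac{p_{Y_t|\B}(y')}{p_{Y_t|\A}(y')}z_{y'\A}\leq z_{y'\A}+\bigl(p_{Y_t|\B}(y')-p_{Y_t|\A}(y')\bigr)_+$ and using $\sum z\leq\Delnp\leq\sum_{y'>0}\bigl(p_{Y_t|\A}(y')-p_{Y_t|\B}(y')\bigr)_+$ yields the claim. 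The paper avoids this detour: it bounds $\sum_{y'}|f(y',d)|$ for each $d\in\{0,1\}$ separately by $\|p_{Y_t|\A}-p_{Y_t|\B}\|_1$, via a case analysis around the threshold qualification $y_\A$ (resp.\ $y_\B$) combined with the selection-rate inequality from Lemma \ref{lem1}. Either route works, but as submitted your argument is incomplete at its hardest point; you also need to handle the tie-breaking/null-mass issue (the policy is unconstrained at $y'$ with $p_{Y_t|\A}(y')=0$, which your weighted policy discrepancy is sensitive to), and to state the without-loss-of-generality assumption on which group is advantaged at time $t$, since your decomposition is asymmetric in $\A$ and $\B$.
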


Proof of Theorem \ref{th:penalty-conv} is given in Appendix \ref{appndxproofth:penalty-conv}.

\subsection{Dynamics of the institution's objective}

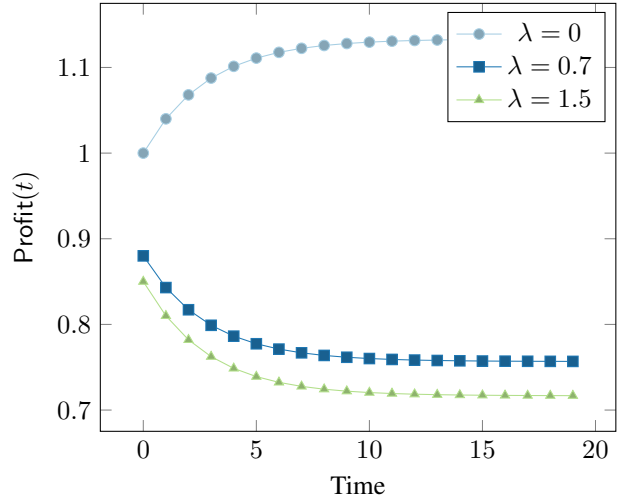
\begin{figure}
    \centering
        \begin{tikzpicture}
    \begin{axis}[
      xlabel=Time,
      ylabel=$\mathsf{Profit}(t)$,
      yticklabel style={
        /pgf/number format/fixed,
        /pgf/number format/precision=2
      }
    ]
    \addplot table [y=$00$, x=X]{data_util.dat};
    \addlegendentry{$\lambda = 0$}
    \addplot table [y=$07$, x=X]{data_util.dat};
    \addlegendentry{$\lambda = 0.7$}
    \addplot table [y=$15$, x=X]{data_util.dat};
    \addlegendentry{$\lambda = 1.5$}
    \end{axis}
    \end{tikzpicture}
    \caption{
    Evolution of the institution's objective $\mathsf{Profit}(t)=\E[D_tY_t]-\lambda\cdot g(\Delta_t)$ for different levels of discrimination penalty $\lambda$.}
    \label{fig:util-chart}
\end{figure}

In this section, we study the dynamics of the institution's objective $\mathsf{Profit}(t)=\E[D_tY_t]-\lambda\cdot g(\Delta_t)$ as time progresses. Figure \ref{fig:util-chart} considers the setting in Example \ref{example1} and plots the function $t\mapsto \mathsf{Profit}(t)$
for different values of $\lambda$. One can observe that $\mathsf{Profit}(t)$ might increase or decrease in time depending on the value of $\lambda$.
The following theorem studies the behavior of $\mathsf{Profit}(t)$ as a function of time for a different natural setting in which the selection of an individual has a positive average impact on one's qualification.

\begin{theorem}\label{thmnw5}
    Assume that selecting an individual would cause her average qualification to grow on average (for instance, due to the job experience), i.e.,
$$\mathbb{E}[Y_{t+1}|D_t=1, Y_t=y_t]\geq y_t, \qquad\forall y_t.$$
Equivalently,
$$\sum_{y_{t+1}}y_{t+1}q(y_{t+1}|D_t=1,y_t)\geq y_t, \qquad\forall y_t.$$
We do not make any assumption about the way qualifications of unselected individuals change over time. Then, for $\lambda\geq 0$, any penalty function $g(\cdot)$ and any initial distribution $p(c,y_0)$, the sequence of objective values at time $t$
\begin{gather}
\mathsf{Profit}(t)=\maximize_{p(d_t|c,y_t)}\; \left[\E[D_tY_t] - \lambda \cdot g(\F_t)\right]
\end{gather}
will be a non-decreasing function of $t$.
\end{theorem}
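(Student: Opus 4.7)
The plan is to lower-bound $\mathsf{Profit}(t+1)$ by evaluating the objective at a carefully constructed, possibly suboptimal, policy at time $t+1$ that essentially ``replays'' the optimal decision of time $t$.

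First I would fix any optimal policy $p^*(d_t|c,y_t)$ at time $t$, so that together with $p(c)$, $p(y_t|c)$, and the transition kernel $q(y_{t+1}|y_t,d_t)$ we obtain the full joint distribution of $(C,Y_t,D_t,Y_{t+1})$. The key construction is the ``copying'' candidate policy at time $t+1$:
$$\tilde p(D_{t+1}=1\mid c,y_{t+1}) \; := \; \Pr(D_t=1\mid C=c,\, Y_{t+1}=y_{t+1}),$$
computed from that joint. The essential property, obtained by summing $y_{t+1}$ out against $p(y_{t+1}|c)$, is that $\tilde p$ preserves the per-group selection rates, i.e.\ $\Pr^{\tilde p}(D_{t+1}=1\mid C=c)=\Pr(D_t=1\mid C=c)$ for each $c\in\{\A,\B\}$. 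Consequently $\Delta^{\tilde p}_{t+1}=\Delta_t$, and the penalty term $\lambda\cdot g(\Delta)$ carries over unchanged to time $t+1$.

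Next I would handle the utility term under $\tilde p$. A direct computation gives
$$\E^{\tilde p}[D_{t+1}Y_{t+1}] \;=\; \sum_{c,y_{t+1}} y_{t+1}\,\Pr(C=c,Y_{t+1}=y_{t+1},D_t=1) \;=\; \E[D_t Y_{t+1}],$$
and then, marginalising $Y_{t+1}$ against the dynamics and using the growth hypothesis pointwise in $y_t$,
$$\E[D_t Y_{t+1}] \;=\; \sum_{y_t} p(y_t,D_t=1)\sum_{y_{t+1}} y_{t+1}\,q(y_{t+1}\mid y_t,D_t=1) \;\geq\; \sum_{y_t} p(y_t,D_t=1)\,y_t \;=\; \E[D_t Y_t].$$
Combining the two observations, $\tilde p$ attains objective at least $\E[D_t Y_t]-\lambda g(\Delta_t)=\mathsf{Profit}(t)$ at time $t+1$, and since $\mathsf{Profit}(t+1)$ is the maximum over all time-$(t+1)$ policies, we conclude $\mathsf{Profit}(t+1)\geq \mathsf{Profit}(t)$.

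The main obstacle, and the only non-routine step, is identifying the right candidate policy. Because $g$ is nonlinear and can amplify changes in $\Delta$, a naive construction that only tracks utility growth could leave us with $g(\Delta_{t+1})>g(\Delta_t)$ and break monotonicity. The policy $\tilde p$ is designed precisely so that the per-group selection marginals at $t+1$ match those at $t$ exactly, which neutralises the penalty term and allows the growth hypothesis on $q(\cdot\mid\cdot,1)$ to deliver the inequality on the utility term alone. Note that no assumption on the transitions under $D_t=0$ is used, as required.
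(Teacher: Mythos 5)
Your proof is correct and follows essentially the same route as the paper: the paper's proof also defines the time-$(t+1)$ candidate policy $p(D_{t+1}=d\mid c,Y_{t+1}=y_{t+1})\triangleq p(D_t=d\mid c,Y_{t+1}=y_{t+1})$, observes that it preserves the per-group selection rates (hence $\Delta_{t+1}=\Delta_t$), and uses the growth hypothesis to get $\E[D_tY_{t+1}]\geq\E[D_tY_t]$ before appealing to optimality at time $t+1$. No gaps.
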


Proof of the above theorem can be found in Appendix \ref{appndxproofth:thmnw5}. Note that the objective function has two components $\E[D_tY_t]$ and $- \lambda \cdot g(\F_t)$. The fact that their sum will be a non-decreasing function of $t$ indicates that at least one of the components should not decrease in each time slot. However, this may not lead to social parity. Observe that if we have $\Pr(Y_0>0|C=\B)=0$ and $\lambda$ is not large enough for the penalty to become effective, the institution may never select from group $\B$ and one may never achieve social parity, even though the institution's total income (after paying the discrimination penalty) might still increase over time.

\subsection{Stationary states}

In this section, we are interested in stationary states (as defined in Definition \ref{def3}) in which groups have the same qualification distribution and, therefore, enjoy equal selection rates. Such stationary states always exist, as shown in the following lemma:

\begin{lemma} \label{lem:stationary}
    Consider a finite Markov chain with the transition matrix $T_{\mathcal{Y} \times \mathcal{Y}}$
\[
T(y'|y) =  \begin{cases}
        q(y'|y,1)& \text{if }y>0 \\
        q(y'|y,0)& \text{if }y<0
        \end{cases}, \qquad \forall y,y'\in\mathcal{Y}.
\]
   Let $r_{Y}$ be any stationary distribution of the above Markov chain. Then,
   $(p_{Y_t|\A}, p_{Y_t|\B})=(r_Y,r_Y)$ along with
  $$D= \begin{cases}
0,&Y<0\\
1,&Y>0
\end{cases}$$
   is a stationary state of our dynamic as defined in Definition \ref{def3}. This stationary distribution  satisfies
    $p_{Y_t|\A} = p_{Y_t|\B}$. Thus, for stationary states of this type, we have
     \[
    \Sct{C=\A} = \Sct{C=\B}.
    \]
\end{lemma}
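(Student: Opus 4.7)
The plan is to verify the two conditions in Definition \ref{def3}: first, that the described policy $D = \mathbf{1}[Y > 0]$ is an optimal solution of the optimization problem in \eqref{eqnformulationd} when the state is $(p_{Y_t|\A}, p_{Y_t|\B}) = (r_Y, r_Y)$; and second, that applying this policy to this state leaves both conditional distributions unchanged in the next time step.

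For the first condition, I would observe that because $p_{Y_t|\A} = p_{Y_t|\B} = r_Y$ and the proposed policy depends only on $y$ (not on $c$), the two selection rates satisfy $\Sct{C=\A} = \sum_{y>0} r_Y(y) = \Sct{C=\B}$, so $\Delta_t = 0$ and the penalty term $\lambda g(\Delta_t)$ vanishes (using $g(0)=0$). The objective reduces to $\E[D_t Y_t]$, and since $D_t \in \{0,1\}$ and $y \neq 0$, the same argument used to derive \eqref{uumdef} gives
\begin{align*}
\E[D_t Y_t] \;\leq\; \E[(Y_t)_+],
\end{align*}
with equality achieved exactly when $D_t = 1$ whenever $Y_t > 0$. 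The proposed policy meets this bound. Since for every admissible policy we have $-\lambda g(\Delta_t) \leq 0$, the value $\E[(Y_t)_+]$ is an upper bound on the full objective in \eqref{eqnformulationd}, so the proposed policy is a maximizer.

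For the second condition, under the proposed policy the decision as a function of the state is $d(y) = \mathbf{1}[y>0]$, so by the definition of $T$,
\begin{align*}
p_{Y_{t+1}|\A}(y') \;=\; \sum_{y} p_{Y_t|\A}(y)\, q(y' \mid y, d(y)) \;=\; \sum_{y} r_Y(y)\, T(y' \mid y) \;=\; r_Y(y'),
\end{align*}
where the last equality uses the stationarity of $r_Y$ under $T$. The identical computation holds for group $\B$, so $(p_{Y_{t+1}|\A}, p_{Y_{t+1}|\B}) = (r_Y, r_Y)$. The final claim $\Sct{C=\A} = \Sct{C=\B}$ is immediate from $p_{Y_t|\A} = p_{Y_t|\B}$ together with the form of the policy.

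There is no substantive obstacle: the argument is essentially a verification. The only conceptual point worth emphasizing is the reduction step in the first condition, namely that whenever the two conditional distributions of $Y_t$ coincide, the discrimination penalty becomes inactive at the unconstrained optimum, so the regulated problem collapses to the $\lambda=0$ case whose solution was already characterized at the start of Section \ref{sec:effstat}.
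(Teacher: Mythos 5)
Your proof is correct and follows essentially the same route as the paper's: show that when both groups share the distribution $r_Y$ the penalty term is inactive at the unconstrained optimum so the utility-maximizing threshold policy $D=\mathbf{1}[Y>0]$ solves \eqref{eqnformulationd}, and then observe that under this policy each group's qualification distribution evolves by the transition matrix $T$, for which $r_Y$ is stationary. Your write-up is in fact slightly more explicit than the paper's in justifying why the threshold policy is a global maximizer of the penalized objective (via the chain $\E[D_tY_t]-\lambda g(\Delta_t)\leq \E[(Y_t)_+]$, attained with $\Delta_t=0$), but the substance is identical.
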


Proof of Lemma \ref{lem:stationary} can be found in Appendix \ref{appndxproofth:band-matrix}. Note that, in general, the Markov chain $T$ does not necessarily have a unique stationary distribution. Therefore, our dynamic will not always have a unique stationary state with equal selection rates. However, finding cases where the stationary state is unique is desirable.

Next, we examine the case where qualification changes slowly over time, meaning that an individual can only move to the next higher or lower levels of qualification or stay put. In other words,
if $\mathcal{Y}=\{y^{(1)}, y^{(2)},\cdots, y^{(n)}\}$ such that
    \[
    y^{(1)} < y^{(2)} < \cdots < y^{(n)}
     \]
     we assume that
    \begin{align*}
        |i-j| \leq 1 \iff q(y^{(i)}|y^{(j)}, d) > 0.
    \end{align*}
One preliminary observation in this scenario is that stationary states of the type described in Lemma \ref{lem:stationary}
are unique in this case because the Markov chain in Lemma \ref{lem:stationary} will be irreducible and aperiodic. However, apart from this unique stationary state with equality across groups, other stationary states may exist that persist disparities.
But in the following theorem, we prove that if the selection increases the chance of qualification improvement and decreases the chance of degrading, then the stationary state from Lemma \ref{lem:stationary} is the unique stationary state of the system.

\begin{theorem} \label{th:band-matrix}
    Assume that
$\mathcal{Y}=\{y^{(1)}, y^{(2)},\cdots, y^{(n)}\}$ where $
    y^{(1)} < y^{(2)} < \cdots < y^{(n)}
     $.
     Moreover, assume that
    \begin{align}
        |i-j| \leq 1 \iff q(y^{(i)}|y^{(j)}, d) > 0\label{assumptionqt}
    \end{align}
and \begin{align}
    q(y^{(i+1)}|y^{(i)}, 1) \geq q(y^{(i+1)}|y^{(i)}, 0)\qquad
    \forall 1 \leq i < n
    \label{eqnaa71}\\
    q(y^{(i-1)}|y^{(i)}, 1) \leq q(y^{(i-1)}|y^{(i)}, 0),\qquad \forall 1  < i \leq n\label{eqnaa72}
\end{align}
Then, for any discrimination penalty $g(\cdot)$ and $\lambda$ the system has a unique stationary state. In this unique stationary distribution (which can be found using the procedure given in Lemma \ref{lem:stationary}), the groups have the same qualification distribution.
\end{theorem}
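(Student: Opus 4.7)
The plan is to show that any stationary state $(p_{Y|\A}, p_{Y|\B}, p_{D|C,Y})$ must coincide with the one in Lemma \ref{lem:stationary}. By swapping the labels $\A$ and $\B$ in the argument if necessary, I may assume $\Sc{C=\A}\geq\Sc{C=\B}$ at the stationary state, so $\F = \Sc{C=\A}-\Sc{C=\B}\geq 0$.

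I first use the first-order/KKT conditions for the instantaneous maximization \eqref{eqnformulationd} to pin down the stationary policy $s_c(y):=p(D{=}1|c,y)$. Writing the objective as $\sum_{c,y}p(c,y)\,y\,s_c(y)-\lambda g(\F)$ and differentiating, the partial derivative in $s_\A(y)$ equals $p(y|\A)[p(C{=}\A)\,y - \lambda g'(\F)]$ and the one in $s_\B(y)$ equals $p(y|\B)[p(C{=}\B)\,y + \lambda g'(\F)]$ (using an appropriate subgradient of $g$ when $\F=0$). Because $g'(\cdot)\geq 0$, the first is strictly negative for $y<0$ and the second strictly positive for $y>0$, forcing $s_\A(y)=0$ whenever $y<0$ and $s_\B(y)=1$ whenever $y>0$. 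Since the bracketed expressions are linear in $y$, each $s_c$ is a threshold policy, hence non-decreasing in $y$. Combining these facts yields the pointwise ordering
\begin{align*}
s_\A(y)\ \leq\ 1_{y>0}\ \leq\ s_\B(y)
\end{align*}
throughout the support.

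The second step exploits the band-matrix assumption \eqref{assumptionqt} together with the monotonicity inequalities \eqref{eqnaa71}-\eqref{eqnaa72}. The induced kernel $T_c(y'|y) = s_c(y)q(y'|y,1)+(1-s_c(y))q(y'|y,0)$ is tridiagonal with strictly positive entries on its three diagonals, hence irreducible and aperiodic, so its unique invariant $p_{Y|c}$ has full support and obeys the birth-death detailed-balance identity $p_{Y|c}(y^{(i+1)})/p_{Y|c}(y^{(i)}) = T_c(y^{(i+1)}|y^{(i)})/T_c(y^{(i)}|y^{(i+1)})$. Assumptions \eqref{eqnaa71}-\eqref{eqnaa72} make the upward entry $T_c(y^{(i+1)}|y^{(i)})$ non-decreasing in $s_c(y^{(i)})$ and the downward entry $T_c(y^{(i)}|y^{(i+1)})$ non-increasing in $s_c(y^{(i+1)})$. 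Together with $s_\A\leq s_\B$ this gives, for every $i$, $T_\A(y^{(i+1)}|y^{(i)})/T_\A(y^{(i)}|y^{(i+1)}) \leq T_\B(y^{(i+1)}|y^{(i)})/T_\B(y^{(i)}|y^{(i+1)})$, i.e.\ a monotone likelihood ratio, which implies first-order stochastic dominance of $p_{Y|\B}$ over $p_{Y|\A}$; in particular $\Pr(Y>0|C=\B)\geq\Pr(Y>0|C=\A)$.

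Combining the two ingredients closes the argument. Since $s_\B$ is non-decreasing and pointwise $\geq s_\A$, stochastic dominance yields $\Sc{C=\B}\geq \sum_y p_{Y|\A}(y)s_\B(y)\geq\Sc{C=\A}$; with the WLOG ordering this forces $\Sc{C=\A}=\Sc{C=\B}$ and hence $\F=0$. On the other hand $s_\A(y)\leq 1_{y>0}\leq s_\B(y)$ yields $\Sc{C=\A}\leq\Pr(Y>0|C=\A)$ and $\Sc{C=\B}\geq\Pr(Y>0|C=\B)$, which together with $\F=0$ and the dominance inequality pin down $\Pr(Y>0|C=\A)=\Pr(Y>0|C=\B)$. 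On the full-support marginals $p_{Y|\A}, p_{Y|\B}$, the default policy $D=1_{Y>0}$ is then the unique maximizer of $\E[DY]$ and it already achieves $\F=0$, so it is the unique optimum of \eqref{eqnformulationd}; consequently $T_\A=T_\B$ equals the kernel $T$ of Lemma \ref{lem:stationary}, whose unique invariant $r_Y$ therefore equals both $p_{Y|\A}$ and $p_{Y|\B}$. I expect the main obstacle to be the stochastic-dominance step: carefully translating the pointwise policy ordering $s_\A\leq s_\B$ into a likelihood-ratio comparison of the two birth-death stationary distributions via detailed balance.
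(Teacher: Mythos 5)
Your proposal is correct and follows essentially the same route as the paper: the policy structure you extract from the first-order conditions is exactly Lemma \ref{lem1} (parts 1, 2 and 4), the entrywise comparison of the two tridiagonal kernels matches the paper's computation of $T_\A$ and $T_\B$, and your detailed-balance/likelihood-ratio step is precisely the content of the paper's Lemma \ref{lem:adv}, which is proved there via the explicit product formula for birth--death stationary distributions. The only point to tighten is the kink at $\F=0$: there the single-variable subgradient condition alone does not determine which group receives the generous policy, so the ordering $s_\A\leq 1_{y>0}\leq s_\B$ should be anchored to the qualification ordering $\Pr(Y>0|C=\A)\geq \Pr(Y>0|C=\B)$ (as in Lemma \ref{lem1}, established by two-variable exchange arguments) rather than to the selection-rate ordering you take as your WLOG.
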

Proof of the above theorem can be found in Appendix \ref{appndxproofth:band-matrix}.

\section{Experiments} \label{sec:experiments}

We conducted experiments using real-world data from the Law School Admission Council (LSAC) dataset (\cite{Wightman1998LSACNL}). This dataset tracks the academic performance of 20,649 law students, encompassing sensitive attributes such as race. We consider admission to law school as a selection problem, with individual qualifications defined by their cumulative GPA. We subtracted 2.95 from the GPAs to obtain an individual's qualification, so a GPA below 3 will be negatively qualified and otherwise positive (GPAs are multiples of $0.1$ so subtracting $2.95$, instead of subtracting $3$, ensures that everyone gets a non-zero qualification). We categorized individuals into two distinct race groups consisting of 17921 white students and 3485 non-white students. The normalized histogram of the data in each group is depicted in Figure \ref{fig:lsac_init_dist}.

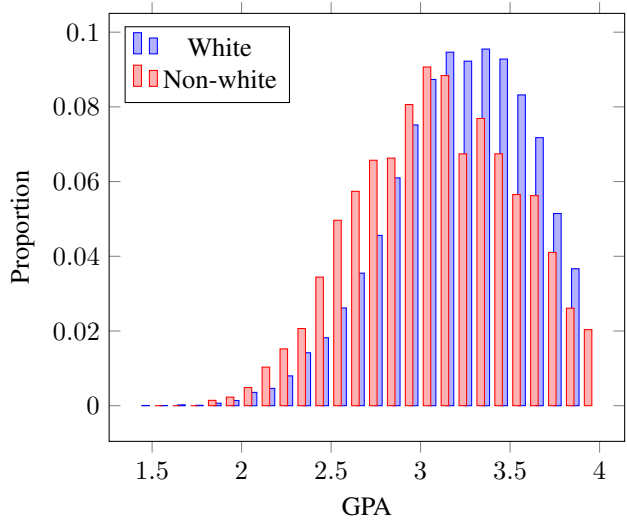
\begin{figure}
    \centering
\begin{tikzpicture}
  \begin{axis}[
      xlabel=GPA,
      ylabel=Proportion,
      ybar, 
      bar width=0.1cm,
      yticklabel style={
        /pgf/number format/fixed,
        /pgf/number format/precision=2
      },
      legend style={at={(0.03,0.97)},anchor=north west},
              ]
    \addplot+[fill] table[x=y,y=ap] {lsac_init.dat};
    \addlegendentry{White}
    \addplot+[fill] table[x=y,y=bp] {lsac_init.dat};
    \addlegendentry{Non-white}
     \end{axis}
\end{tikzpicture}
    \caption{Distribution of GPA across the two race groups in LSAC dataset}
    \label{fig:lsac_init_dist}
\end{figure}

Figure \ref{fig:lsac_effective} illustrates the impact of three different penalty functions $g(x)=x$, $g(x)=x^2$, and $g(x)=e^x$ on the selection disparity.
As predicted by Theorem \ref{th:effective} and confirmed by the experiments, we observe that for each penalty function, there is some threshold level of $\lambda$, below which, the penalty is ineffective in decreasing disparity in selection. Furthermore, since only $g(x)=x$ and $g(x)=e^x$ have non-zero derivatives at $x=0$, only these two functions can guarantee complete demographic parity ($\Delta=0$) for sufficiently large $\lambda$ in accordance with Theorem \ref{th:fulldp} and as depicted in the figure.

\begin{figure}
    \centering
\begin{tikzpicture}
\begin{axis}[
  xlabel=$\lambda$,
  ylabel=$\Delta$,
  yticklabel style={
    /pgf/number format/fixed,
    /pgf/number format/precision=2
  },
  xticklabel style={
    /pgf/number format/fixed,
    /pgf/number format/precision=2
  },
  legend style={nodes={scale=0.7, transform shape}},
  tick label style={font=\tiny},
  every axis plot/.append style={ultra thick}
]
\addplot+[no marks] table [y=x, x=lamb]{law_effective.dat};
\addlegendentry{$g(x)=x$}
\addplot+[no marks] table [y=x2, x=lamb]{law_effective.dat};
\addlegendentry{$g(x)=x^2$}

\addplot+[no marks] table [y=exp, x=lamb]{law_effective.dat};
\addlegendentry{$g(x)=e^x$}
\end{axis}
\end{tikzpicture}
\label{fig:chart}
    \caption{Impact of different penalty functions on disparity on LSAC dataset}
    \label{fig:lsac_effective}
\end{figure}
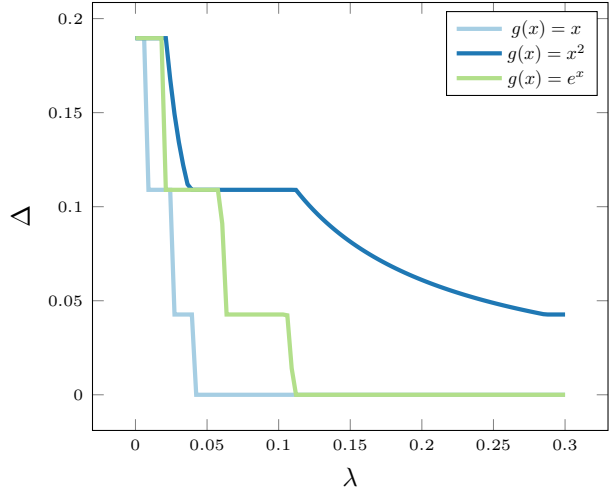

\subsection{Dynamics}

Since real-world datasets only capture a snapshot in time, we synthesize the dynamics for the experiments and build the initial state of the system based on the real-world data.
First, we consider dynamics satisfying the conditions of Theorem \ref{th:band-matrix}. Moreover, we assume that for selected individuals, the probabilities of growth and degradation are the same for all qualifications. In other words, for some $q^{1+}, q^{1-}, q^{0+}, q^{0-} \in [0,1]$ satisfying $q^{0+}+q^{0-}\leq 1$, $q^{1+}+q^{1-}\leq 1$ and $q^{0-}\geq q^{1-}$, we assume that for all $1 \leq i < n$ we have
\begin{align*}
q(y^{(i+1)}|y^{(i)},1) &= q^{1+},\\
     q(y^{(i-1)}|y^{(i)},1) &= q^{1-},
     \\
     q(y^{(i)}|y^{(i)},1) &= 1-q^{1+}-q^{1-},
\\
q(y^{(i+1)}|y^{(i)},0) &= q^{0+} ,\\
 q(y^{(i-1)}|y^{(i)},0)&= q^{0-},
 \\
 q(y^{(i)}|y^{(i)},0) &= 1-q^{0+}-q^{0-}.
\end{align*}

We generated 100 dynamics  by randomly choosing $q^{1+}, q^{1-}, q^{0+}, q^{0-} \in [0,1]$ as follows:
\begin{align*}
    q^{1+} &\sim \mathit{Uniform}(0, 1) \\
    q^{1-} &\sim \mathit{Uniform}(0, 1-q^{1+}) \\
    q^{0+} &\sim \mathit{Uniform}(0, q^{1+}) \\
    q^{0-} &\sim \mathit{Uniform}(q^{1-}, 1-q^{0+})
\end{align*}
Using a randomly chosen $\lambda \sim \mathit{Uniform}(0,1)$ for each dynamic, Figure \ref{fig:plot_homo_growth} depicts the quartiles of total variation between qualification distributions across the groups over time, showcasing the convergence $\lim_{t\rightarrow \infty} ||p_{Y_t|\A} - p_{Y_t|\B} ||_1= 0$ (which also implies $\lim_{t\rightarrow \infty} \Delta_t = 0$) across all samples. We conjecture that such a convergence holds in general for the dynamics considered in Theorem \ref{th:band-matrix}, and leave it for future work.

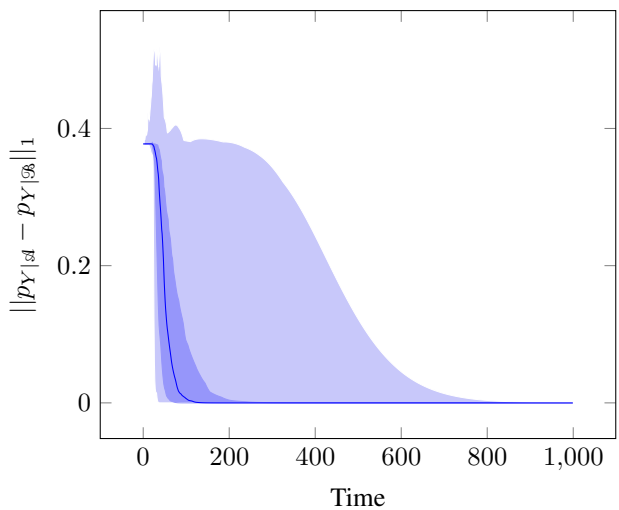
\begin{figure}
    \centering
        \begin{tikzpicture}
    \begin{axis}[
      xlabel=Time,
      ylabel=$||p_{Y|\A}-p_{Y|\B}||_1$,
      yticklabel style={
        /pgf/number format/fixed,
        /pgf/number format/precision=2
      }
    ]
    \addplot[color=white,name path=A] table [y=0.0, x=x]{homo_growth.dat};
    \addplot[color={rgb,255:red,150; green,150; blue,250},name path=B] table [y=0.25, x=x]{homo_growth.dat};
    \addplot[color={rgb,255:red,150; green,150; blue,250},name path=C] table [y=0.75, x=x]{homo_growth.dat};
    \addplot[color=white,name path=D] table [y=1.0, x=x]{homo_growth.dat};
    \addplot[color=blue,name path=M] table [y=0.5, x=x]{homo_growth.dat};
    \addplot+[color={rgb,255:red,200; green,200; blue,250}] fill between[of=A and D];
    \addplot+[color={rgb,255:red,150; green,150; blue,250}] fill between[of=B and C];
    \end{axis}
    \end{tikzpicture}
    \caption{Quartiles of $\Delta$ for 100 randomly generated dynamics satisfying conditions of Theorem} \ref{th:band-matrix}
    \label{fig:plot_homo_growth}
\end{figure}

Furthermore, in Theorem \ref{th:um} we constructed an example for $\lambda>0$ in which $\lim_{t\rightarrow\infty}\Delta_t>\Delta_0$. The question arises as to how common such examples are. To answer this question, we designed two simulations. We choose $g(x)=x$. To choose $\lambda$, we refer to Figure \ref{fig:lsac_effective}. We consider two choices of $\lambda=0.03$ and $\lambda=0.05$. Note that the penalty is fully satisfactory at $\lambda=0.05$, so in this case at time $t=0$, the penalty forces $\Delta_0=0$. However, for $\lambda=0.03$, we will have $\Delta_0>0$ at time $t=0$.

For $\lambda=0.03$, we first generated 20 random conditional distributions $q(y|y',d)$ and illustrated their dynamics
in Figure \ref{fig:random_band} (Figure \ref{fig:random_band_zoom} depicts a zoomed version of the same curves to show the initial dynamics). We observe that persistent disparity ($\lim_{t\rightarrow\infty}\Delta_t>\Delta_0$) occurs in 7 of the 20 samples. The same experiment is run for $\lambda=0.05$ in Figure \ref{fig:random_band_005} where
at time $t=0$, the penalty is fully satisfactory. Interestingly, while at time $t=0$, the penalty is fully satisfactory, as the underlying distribution of the qualifications of the two groups changes, the penalty may no longer be fully satisfactory and the disparity may emerge.

\begin{figure}
    \centering
    \begin{subfigure}{0.45\textwidth}
    \centering    \begin{tikzpicture}
    \begin{axis}[
      xlabel=Time,
      ylabel=$\Delta$,
      yticklabel style={
        /pgf/number format/fixed,
        /pgf/number format/precision=2
      },
      every axis plot/.append style={no marks, color=PuOr-F}
    ]
    \addplot table [y=1, x=x]{random_tran_band.dat};
    \addplot table [y=2, x=x]{random_tran_band.dat};
    \addplot table [y=3, x=x]{random_tran_band.dat};
    \addplot table [y=4, x=x]{random_tran_band.dat};
    \addplot table [y=5, x=x]{random_tran_band.dat};
    \addplot table [y=6, x=x]{random_tran_band.dat};
    \addplot table [y=7, x=x]{random_tran_band.dat};
    \addplot table [y=8, x=x]{random_tran_band.dat};
    \addplot table [y=9, x=x]{random_tran_band.dat};
    \addplot table [y=0, x=x]{random_tran_band.dat};
    \addplot table [y=11, x=x]{random_tran_band.dat};
    \addplot table [y=12, x=x]{random_tran_band.dat};
    \addplot table [y=13, x=x]{random_tran_band.dat};
    \addplot table [y=14, x=x]{random_tran_band.dat};
    \addplot table [y=15, x=x]{random_tran_band.dat};
    \addplot table [y=16, x=x]{random_tran_band.dat};
    \addplot table [y=17, x=x]{random_tran_band.dat};
    \addplot table [y=18, x=x]{random_tran_band.dat};
    \addplot table [y=19, x=x]{random_tran_band.dat};
    \addplot table [y=10, x=x]{random_tran_band.dat};
    \end{axis}
    \end{tikzpicture}
    \caption{Changes in $\Delta$ for $0\leq t\leq 1000$}
    \label{fig:random_band}
    \end{subfigure}
    \hfill
    \begin{subfigure}{0.45\textwidth}
    \centering    \begin{tikzpicture}
    \begin{axis}[
      xlabel=Time,
      ylabel=$\Delta$,
      yticklabel style={
        /pgf/number format/fixed,
        /pgf/number format/precision=2
      },
      every axis plot/.append style={no marks, color=PuOr-F}
    ]
    \addplot table [y=1, x=x, restrict x to domain=0:50]{random_tran_band.dat};
    \addplot table [y=2, x=x, restrict x to domain=0:50]{random_tran_band.dat};
    \addplot table [y=3, x=x, restrict x to domain=0:50]{random_tran_band.dat};
    \addplot table [y=4, x=x, restrict x to domain=0:50]{random_tran_band.dat};
    \addplot table [y=5, x=x, restrict x to domain=0:50]{random_tran_band.dat};
    \addplot table [y=6, x=x, restrict x to domain=0:50]{random_tran_band.dat};
    \addplot table [y=7, x=x, restrict x to domain=0:50]{random_tran_band.dat};
    \addplot table [y=8, x=x, restrict x to domain=0:50]{random_tran_band.dat};
    \addplot table [y=9, x=x, restrict x to domain=0:50]{random_tran_band.dat};
    \addplot table [y=0, x=x, restrict x to domain=0:50]{random_tran_band.dat};
    \addplot table [y=11, x=x, restrict x to domain=0:50]{random_tran_band.dat};
    \addplot table [y=12, x=x, restrict x to domain=0:50]{random_tran_band.dat};
    \addplot table [y=13, x=x, restrict x to domain=0:50]{random_tran_band.dat};
    \addplot table [y=14, x=x, restrict x to domain=0:50]{random_tran_band.dat};
    \addplot table [y=15, x=x, restrict x to domain=0:50]{random_tran_band.dat};
    \addplot table [y=16, x=x, restrict x to domain=0:50]{random_tran_band.dat};
    \addplot table [y=17, x=x, restrict x to domain=0:50]{random_tran_band.dat};
    \addplot table [y=18, x=x, restrict x to domain=0:50]{random_tran_band.dat};
    \addplot table [y=19, x=x, restrict x to domain=0:50]{random_tran_band.dat};
    \addplot table [y=10, x=x, restrict x to domain=0:50]{random_tran_band.dat};
    \end{axis}
    \end{tikzpicture}
    \caption{Changes in $\Delta$ for $0\leq t\leq 50$)}
    \label{fig:random_band_zoom}
        \end{subfigure}
\caption{Evolution of $\Delta$ for random transitions with $\lambda=0.03$}

\end{figure}
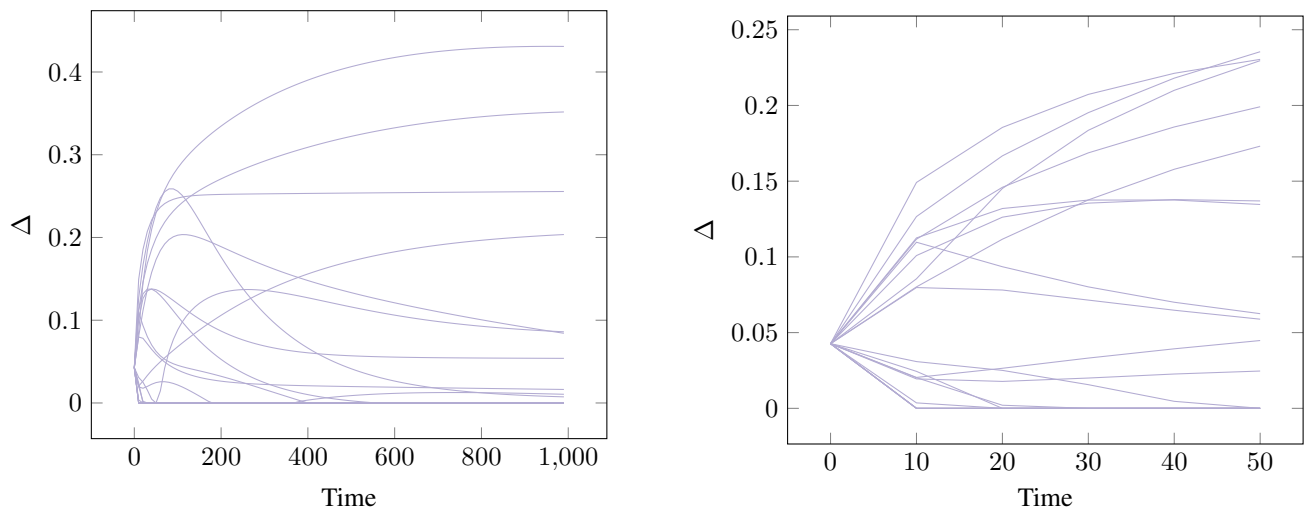

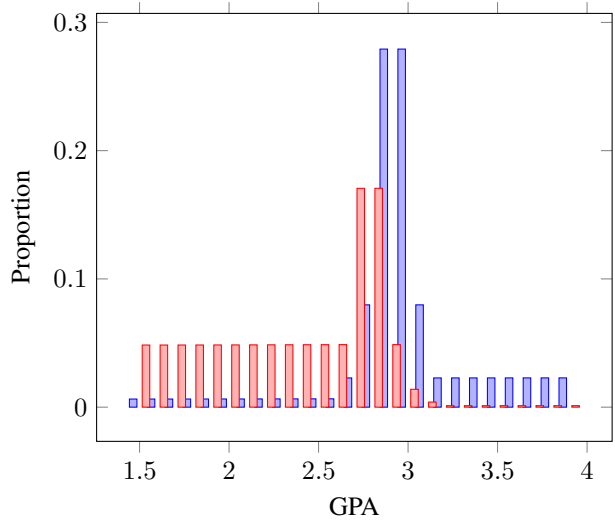
\begin{figure}
    \centering
\begin{tikzpicture}
  \begin{axis}[
      xlabel=GPA,
      ylabel=Proportion,
      ybar, 
      bar width=0.1cm,
      yticklabel style={
        /pgf/number format/fixed,
        /pgf/number format/precision=2
      },
      legend style={at={(0.03,0.97)},anchor=north west},
              ]
    \addplot+[fill] table[x=y,y=a] {disparity_final_dist.dat};
    \addplot+[fill] table[x=y,y=b] {disparity_final_dist.dat};
     \end{axis}
\end{tikzpicture}
    \caption{Final distribution of GPA across the groups in a dynamic leading to persisting disparity}
    \label{fig:disparity_final_dist}
\end{figure}
\begin{figure}
    \centering
    \begin{tikzpicture}
    \begin{axis}[
      xlabel=Time,
      ylabel=$\Delta$,
      yticklabel style={
        /pgf/number format/fixed,
        /pgf/number format/precision=2
      },
      every axis plot/.append style={no marks, color=PuOr-F}
    ]
    \addplot table [y=1, x=x]{random_tran_band_005.dat};
    \addplot table [y=2, x=x]{random_tran_band_005.dat};
    \addplot table [y=3, x=x]{random_tran_band_005.dat};
    \addplot table [y=4, x=x]{random_tran_band_005.dat};
    \addplot table [y=5, x=x]{random_tran_band_005.dat};
    \addplot table [y=6, x=x]{random_tran_band_005.dat};
    \addplot table [y=7, x=x]{random_tran_band_005.dat};
    \addplot table [y=8, x=x]{random_tran_band_005.dat};
    \addplot table [y=9, x=x]{random_tran_band_005.dat};
    \addplot table [y=0, x=x]{random_tran_band_005.dat};
    \addplot table [y=11, x=x]{random_tran_band_005.dat};
    \addplot table [y=12, x=x]{random_tran_band_005.dat};
    \addplot table [y=13, x=x]{random_tran_band_005.dat};
    \addplot table [y=14, x=x]{random_tran_band_005.dat};
    \addplot table [y=15, x=x]{random_tran_band_005.dat};
    \addplot table [y=16, x=x]{random_tran_band_005.dat};
    \addplot table [y=17, x=x]{random_tran_band_005.dat};
    \addplot table [y=18, x=x]{random_tran_band_005.dat};
    \addplot table [y=19, x=x]{random_tran_band_005.dat};
    \addplot table [y=10, x=x]{random_tran_band_005.dat};
    \end{axis}
    \end{tikzpicture}
    \caption{Evolution of $\Delta$ for random transitions with $\lambda=0.05$}
    \label{fig:random_band_005}
\end{figure}

Next, as another experiment, considering the initial configuration derived from the dataset, we first manually identified a dynamic $q(y|y',d)$ leading to persistent disparities ($\lim_{t\rightarrow\infty}\Delta_t>\Delta_0$) when $g(x)=x$ and $\lambda=0.03$. The final distribution of qualifications across groups in this dynamic is depicted in Figure \ref{fig:disparity_final_dist}. Subsequently, we generated 10 random perturbations of $q(y|y',d)$ by adding Gaussian noise of $\mathcal{N}(0, \sigma^2=0.1)$ to each transition probability, then taking the absolute value and normalizing to obtain a valid $\tilde q(y|y',d)$ again. Note that the noise standard deviation $\sigma\approx 0.31$, so the perturbations are not small.
Figure \ref{fig:samles_persisting_disparity} reveals that, aside from one instance, all systems tend towards persistent disparities over time, i.e., $\lim_{t\rightarrow\infty}\Delta_t>\Delta_0$.  We repeated the experiment of Figure \ref{fig:lsac_persisting_disparity_005} for $\lambda=0.05$.

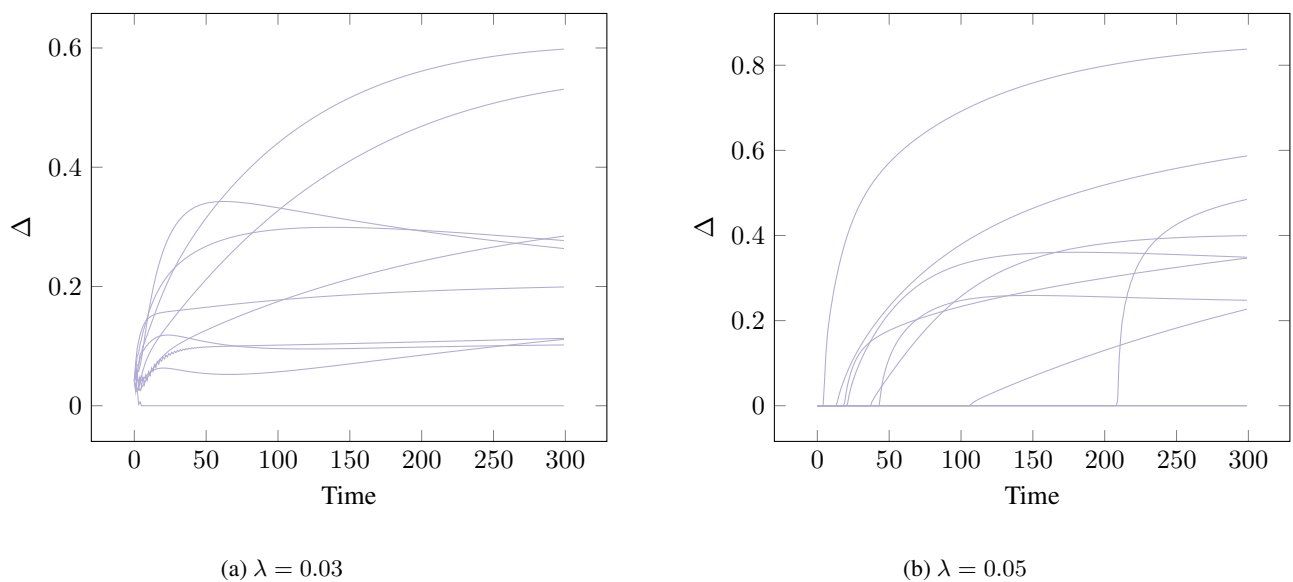
\begin{figure}
    \centering
\begin{subfigure}{.45\textwidth}
    \centering
        \begin{tikzpicture}
    \begin{axis}[
      xlabel=Time,
      ylabel=$\Delta$,
      yticklabel style={
        /pgf/number format/fixed,
        /pgf/number format/precision=2
      },
      every axis plot/.append style={no marks, color=PuOr-F}
    ]
    \addplot[name path=A] table [y=1, x=x]{disparity_perturb.dat};
    \addplot[name path=A] table [y=2, x=x]{disparity_perturb.dat};
    \addplot[name path=A] table [y=3, x=x]{disparity_perturb.dat};
    \addplot[name path=A] table [y=4, x=x]{disparity_perturb.dat};
    \addplot[name path=A] table [y=5, x=x]{disparity_perturb.dat};
    \addplot[name path=A] table [y=6, x=x]{disparity_perturb.dat};
    \addplot[name path=A] table [y=7, x=x]{disparity_perturb.dat};
    \addplot[name path=A] table [y=8, x=x]{disparity_perturb.dat};
    \addplot[name path=A] table [y=9, x=x]{disparity_perturb.dat};
    \addplot[name path=A] table [y=0, x=x]{disparity_perturb.dat};
    \end{axis}
    \end{tikzpicture}
    \caption{$\lambda=0.03$}\label{fig:samles_persisting_disparity}
\end{subfigure}
\hfill
\begin{subfigure}{.45\textwidth}
    \centering
        \begin{tikzpicture}
    \begin{axis}[
      xlabel=Time,
      ylabel=$\Delta$,
      yticklabel style={
        /pgf/number format/fixed,
        /pgf/number format/precision=2
      },
      every axis plot/.append style={no marks, color=PuOr-F}
    ]
    \addplot[name path=A] table [y=1, x=x]{disparity_perturb_005.dat};
    \addplot[name path=A] table [y=2, x=x]{disparity_perturb_005.dat};
    \addplot[name path=A] table [y=3, x=x]{disparity_perturb_005.dat};
    \addplot[name path=A] table [y=4, x=x]{disparity_perturb_005.dat};
    \addplot[name path=A] table [y=5, x=x]{disparity_perturb_005.dat};
    \addplot[name path=A] table [y=6, x=x]{disparity_perturb_005.dat};
    \addplot[name path=A] table [y=7, x=x]{disparity_perturb_005.dat};
    \addplot[name path=A] table [y=8, x=x]{disparity_perturb_005.dat};
    \addplot[name path=A] table [y=9, x=x]{disparity_perturb_005.dat};
    \addplot[name path=A] table [y=0, x=x]{disparity_perturb_005.dat};
    \end{axis}
    \end{tikzpicture}
    \caption{$\lambda=0.05$}\label{fig:lsac_persisting_disparity_005}
\end{subfigure}
\caption{Evolution of $\Delta$ for 10 random perturbations of a dynamic resulting in persistent disparity}
\end{figure}

\section{Conclusion}

In addressing the impact of regulating AI discrimination within society, it is essential to undertake a thorough assessment of the efficiency of potential interventions. Approaches driven by intuitive motives may prove ineffective or, worse, lead to unintended consequences.

In this paper, we introduced a simple model that encapsulates the strategy of penalizing discrimination among various groups in the selection process of utility-maximizing institutions. Initially, we established conditions necessary for the effectiveness of such penalties, (i.e., compel the institution to reduce the disparity in selection from their initial state), and conditions for them to be fully satisfactory (i.e., reducing the disparity to zero). These conditions illustrate potential scenarios where penalties may prove ineffective and where different penalty formulas may yield disparate outcomes.
Subsequently, we examined the evolution of individuals' qualifications with respect to institutions' selection through a stochastic process model. We demonstrated that equal opportunities for change across groups, naturally lead to parity in qualifications and selection over time. However, we highlighted counterintuitive cases where penalizing discrimination may prevent this convergence. Next, we identified conditions that ensure the prevention of such scenarios, including sufficiently high transition probabilities and a positive impact of selection in cases of gradual change.

\bibliographystyle{unsrtnat}
\bibliography{references}

\appendix

\section{\label{app:proof}}
\subsection{A useful lemma}
\begin{lemma}
\label{lem1}
    If $\lambda>0$, the optimal policy $p(d|c,y)$ maximizing \eqref{eq:prob} is not necessarily unique. Moreover, each optimal policy $p(d|c,y)$ satisfies the following properties:
    \begin{enumerate}
        \item Selection probability within a group is a non-decreasing function of qualification ($y$). More precisely,   if 
    $\Pr(Y=y,C=c)>0$ and $\Pr(Y=y',C=c)>0$ for some $y>y'$ and $c \in \G$, then 
            \[\Sc{Y=y, C=c} \geq\Sc{Y=y', C=c}
            \]
        \item Within each group at most one qualification subgroup has fractional selection selection probability, i.e.,
            \[
            \forall c \,\exists y' \in \mathcal{Y}: \;  \; \Sc{Y=y, C=c} \in \{0,1\}\qquad\forall y \neq y': \Pr(Y=y,C=c)>0.
            \]
        \item Assume $\Pr(Y>0|C=\A) \geq \Pr(Y>0|C=\B)$.\footnote{This assumption can be made without loss of generality in the static case. See \eqref{assumptione}.} The advantaged group, $\A$, never has a smaller selection rate:
        \[\Sc{C=\A} \geq \Sc{C=\B} \]
        \item Positively qualified individuals from group $\B$ are always selected and negatively qualified individuals from group $\A$ never get selected, i.e., 
            \begin{gather*}
            \Pr(Y=y, C=\A)>0,~~~ y < 0 \implies \Sc{Y=y, C=\A} = 0 \\
            \Pr(Y=y, C=\B)>0,~~~ y > 0 \implies \Sc{Y=y, C=\B} = 1
            \end{gather*}
    \end{enumerate}
\end{lemma}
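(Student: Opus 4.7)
My plan is to prove all four properties by local-perturbation arguments: at any optimum, no small change to the policy can strictly increase the objective $\E[DY] - \lambda g(\Delta)$. Since this objective depends on the policy only through the marginals $\alpha_c := \Pr(D{=}1|C{=}c)$ (via $\Delta$) together with each group's within-group utility, it is natural to use \emph{within-group swaps} that preserve $\alpha_c$ (and hence $\Delta$ and the penalty) for parts 1 and 2, and \emph{cross-group perturbations} that move $\alpha_A$ and $\alpha_B$ for parts 3 and 4.

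For part 1, suppose $p(D{=}1|Y{=}y,C{=}c) < p(D{=}1|Y{=}y',C{=}c)$ for some $y > y'$ with positive joint mass. I would raise the first by $\epsilon$ and lower the second by $\epsilon\,\Pr(Y{=}y|C{=}c)/\Pr(Y{=}y'|C{=}c)$, choosing $\epsilon > 0$ small enough that both entries remain in $[0,1]$. This preserves $\alpha_c$ and changes the utility by $\epsilon\,\Pr(Y{=}y,C{=}c)(y-y') > 0$, a contradiction. Part 2 is the same swap: if two qualification levels in $c$ both have fractional selection probabilities, there is room in both directions to apply it, again strictly improving utility while preserving $\alpha_c$.

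For part 3, suppose for contradiction $\alpha_A < \alpha_B$, so $\Delta = \alpha_B - \alpha_A$. If any positive $y$ in $A$ had $p < 1$, raising it would strictly raise utility (as $y > 0$) and strictly lower $\Delta$ by raising $\alpha_A$; both effects improve the objective, a contradiction. Hence every positive $y$ in $A$ is fully selected and $\alpha_A \geq \Pr(Y{>}0|A)$. A symmetric argument on negative $y$ in $B$ gives $\alpha_B \leq \Pr(Y{>}0|B)$. Combining with $\Pr(Y{>}0|A) \geq \Pr(Y{>}0|B)$ yields $\alpha_A \geq \alpha_B$, the desired contradiction.

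For part 4 we now have $\alpha_A \geq \alpha_B$. When $\alpha_A > \alpha_B$ strictly, the one-sided perturbations of part 3 transfer directly: decreasing $p$ at any negative $y \in A$ improves utility and shrinks $\Delta$, and symmetrically for positive $y \in B$. The main obstacle is the boundary case $\alpha_A = \alpha_B$, where a unilateral push moves $\Delta$ away from $0$ and can increase the penalty. I would handle it by a paired cross-group perturbation: assuming $p > 0$ at some negative $y^* \in A$, either (i) some negative $y$ in $B$ also has $p > 0$, and decreasing both selections in lockstep preserves $\alpha_A = \alpha_B$ while strictly improving utility; or (ii) every $B$-selection is on positive $y$, so $\alpha_A = \alpha_B \leq \Pr(Y{>}0|B) \leq \Pr(Y{>}0|A)$, yet the fact that $A$-selection reaches $y^* < 0$ combined with monotonicity (part 1) forces some positive $y$ in $A$ to have $p < 1$, giving two fractional levels in $A$ and contradicting part 2. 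Finally, non-uniqueness is witnessed by a binary example $\mathcal{Y} = \{-1,+1\}$ with groups of equal size, positive-qualification rates $0.8$ and $0.2$, and $g(x) = x$ at any sufficiently large $\lambda$: a direct calculation shows the objective is constant along the one-parameter family of policies with $\alpha_A = \alpha_B \in [0.2, 0.8]$.
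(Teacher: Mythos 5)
Your proposal is correct and follows essentially the same local-perturbation strategy as the paper's proof: within-group swaps that fix each group's selection rate for parts 1--2, and one-sided or paired cross-group perturbations (with the $\Sc{C=\A}=\Sc{C=\B}$ boundary case resolved via parts 1--2) for parts 3--4. Your part 3 is organized slightly more cleanly (chaining $\Sc{C=\A}\geq \Pr(Y>0|C=\A)\geq \Pr(Y>0|C=\B)\geq \Sc{C=\B}$ rather than a two-case split), and your explicit binary example verifying non-uniqueness is a welcome addition the paper's proof omits.
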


\begin{proof}$ $ \par

\begin{enumerate}
    \item 
   If for group $c$ and qualifications $y$ and $y'$
\[
y > y' \implies \Sc{y, C=c} \geq\Sc{y', C=c}
\]
does not hold, we can increase $\Sc{y, C=c}$ and decrease $\Sc{y', C=c}$ while fixing $\Sc{C=c}$. This way $\F$ will not change, but $\U=\E[DY]$ will increase.

    \item Assume that 
\[
0<\Sc{y, C=c}<1 ,\quad  0<\Sc{y', C=c} <1
\]
for some $y > y'$. Increase $\Sc{y, C=c}$ and decrease $\Sc{y', C=c}$ while fixing $\Sc{C=c}$. This way $\F$ will not change, but $\U=\E[DY]$ will increase.

    \item Assume
\[
\Sc{C=\A} < \Sc{C=\B}.
\]
Since for $c \in \G$ we have
\[
\Sc{y, C=c} = \Pr(Y>0|C=c) \cdot \Sc{Y > 0, C=c} + \Pr(Y\leq 0|C=c) \cdot \Sc{Y \leq 0, C=c}
\]
then $\Sc{C=\A} < \Sc{C=\B}$ implies either
\begin{align}
\Pr(Y>0|C=\A) \cdot \Sc{Y > 0, C=\A} < \Pr(Y>0|C=\B) \cdot \Sc{Y > 0, C=\B}\label{eqnT1}
\end{align}
or
\begin{align}
\Pr(Y\leq 0|C=\A) \cdot \Sc{Y \leq 0, C=\A} < \Pr(Y \leq 0|C=\B) \cdot \Sc{Y \leq 0, C=\B}.
\label{eqnT2}
\end{align}
Consider the first case. The assumption $\Pr(Y>0|C=\A) \geq \Pr(Y>0|C=\B)$ in conjunction with \eqref{eqnT1} implies $\Sc{Y>0, C=\A} < \Sc{Y>0, C=\B}$. Thus, $\Sc{Y>0, C=\A}<1$. So, at least for one $y 
> 0$ we have $\Sc{y, C=\A} < 1$ and $\Pr(Y=y|C=\A)>0$.
Select $\epsilon \in \R$ such that 
\[
0 < \epsilon < \min\left\{\frac{\Sc{C=\B} - \Sc{C=\A}}{\Pr(Y=y|C=\A)}, \, 1 - \Sc{y, C=\A}\right\}
\]
If the institution increases $\Sc{y, C=\A}$ by $\epsilon$, the inequality $\Sc{C=\A} < \Sc{C=\B}$ will stay satisfied, and the discrimination 
$\Sc{C=\B}-\Sc{C=\A}$ will decrease.
Moreover, $\U=\E[DY]$ increases. This is at odds with the optimality of the policy.

The second case is similar. Assume that \eqref{eqnT2} holds. 
Then, we deduce $0 < \Sc{Y \leq 0, C=\B}.$ So, there exists $y < 0$ such that $\Sc{y, C=\B} > 0$ and $\Pr(Y=y|C=\B)>0$. 
Select $\epsilon \in \R$ such that 
\[
0 < \epsilon < \min\left\{\frac{\Sc{C=\B} - \Sc{C=\A}}{\Pr(Y=y|C=\B)}, \, \Sc{y, C=\B}\right\}
\]
Now, the institution can increase its utility and decrease the penalty by decreasing $\Sc{y, C=\B}$ by $\epsilon$.

\item Assume $\Sc{Y=y, C=\A} > 0$ for some $y < 0$. From the previous part, we know that $\Sc{C=\A} \geq \Sc{C=\B}$. If $\Sc{C=\A} > \Sc{C=\B}$ we can decrease $\Sc{Y=y, C=\A}$ slightly, so that $u=\E[DY]$ increases and $\Delta$ decreases. So the assumed policy is not optimal. Next, consider the case of $\Sc{C=\A} = \Sc{C=\B} $. The assumption $\Sc{Y=y, C=\A} > 0$ and parts 1 and 2 of the lemma imply that for any $y'>y$ such that $\Pr(Y=y'|C=\A)>0$ we have
\[
\Sc{y', C=\A} = 1
\]
So,
\[
\Sc{C=\B} = \Sc{C=\A} \geq \Pr(Y>0|C=\A) > \Pr(Y>0|C=\B).
\]
This, in turn, yields that there is some $y' < 0$ such that $\Sc{Y=y', C=\B} > 0$. By decreasing $\Sc{Y=y, C=\A}$ and $\Sc{Y=y', C=\B}$ appropriately, we can increase $u=\E[DY]$ while keeping $\Delta$ unchanged. 

Next assume $\Sc{Y=y, C=\B} < 1$ for some $y > 0$. This time, if $\Sc{C=\A} > \Sc{C=\B}$ we can increase $\Sc{Y=y, C=\B}$ slightly to improve the institution's objective. If $\Sc{C=\A} = \Sc{C=\B}$, parts 1 and 2 of Lemma \ref{lem1} imply that for all $y' < y$ we have 
\[
\Sc{Y=y', C=\B}= 0.
\]
In other words, 
$\Sc{Y=y', C=\B}> 0
$ implies that $y'>0$. 
This implies that $\Sc{C=\B} \leq \Pr(Y>0|C=\B)$. Then, using the  assumption $\Pr(Y>0|C=\A) >\Pr(Y>0|C=\B)$ in \eqref{assumptione} we have
\[
\Sc{C=\A} = \Sc{C=\B} \leq \Pr(Y>0|C=\B) < \Pr(Y>0|C=\A).
\]
From $\Sc{C=\A}< \Pr(Y>0|C=\A)$ we conclude that there is some $y'' > 0$ such that $\Sc{Y=y'', C=\A} < 1$. 
We can increase $\Sc{Y=y'', C=\A}$ and $\Sc{Y=y, C=\B}$ simultaneously to improve the objective while keeping $\Delta$ unchanged.

\end{enumerate}
\end{proof}

Next, we introduce an equivalent formulation for the main optimization problems in \eqref{eq:prob} and \eqref{eq:demo-parity}. This formulation simplifies the proof for Theorem \ref{th:effective} and Theorem \ref{th:fulldp}. It relates the optimization problem for any $\lambda>0$ to $\Uum=\E[Y_+]$, which is the value of the optimization problem for $\lambda=0$ (see \eqref{uumdef}).

\begin{lemma}\label{lemma-represent}
The optimization problem in \eqref{eq:prob} has the following equivalent form:
\begin{align}
\max_{p(d|c,y)}\; \E[DY] - \lambda \cdot g(\F)
&=
    \label{eq:prob-simp}
    \max_{z_{yc}}&\; \E[Y_+] - \sum_{(y,c) \in \mathcal{T}_e} z_{yc}\cdot p(c)\cdot |y| - \lambda \cdot g\left(\Delnp - \sum_{(y,c) \in \mathcal{T}_e} z_{yc}\right) 
\end{align}

where $\Delnp$ is defined in \eqref{defDNp}, the set $\mathcal{T}_e$ is defined as follows 
\begin{align}\mathcal{T}_e=\{(y,\A): y>0, p(y|C=\A)>0\}\cup \{(y,\B): y<0, p(y|C=\B)>0\},\label{defTe}\end{align}
and the maximum is over $z_{yc}$ (for $c\in \G$) satisfying 
\begin{align}\sum_{(y,c) \in \mathcal{T}_e} z_{yc} &\leq \Delnp, \label{constd1}\\
    0 \leq z_{yc} &\leq p(y|c) \qquad \forall y,c.\label{constd2}\end{align}
Similarly, the optimization problem in \eqref{eq:demo-parity} has the following equivalent form:
\begin{align} 
&\max_{
\substack{p(d|c,y):  \\ p(D=1|C=\A) = p(D=1|C=\B)}
}\; \E[DY] 
= \label{eq:demo-parity-simp}
    \max_{z_{yc}}\; \E[Y_+] - \sum_{(y,c) \in \mathcal{T}_e} z_{yc}\cdot p(c)\cdot |y| 
\end{align}
where the maximum is over $z_{yc}$ satisfying 
\begin{align}
   \sum_{(y,c) \in \mathcal{T}_e} z_{yc} &= \Delnp, \label{const1p2}\\
    0 \leq z_{yc} &\leq p(y|c), \qquad \forall y,c. \label{const2p2}
\end{align}
\end{lemma}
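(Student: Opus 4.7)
The plan is to prove the lemma by expressing every optimal policy as a perturbation of the $\lambda=0$ optimal policy (``select iff $Y>0$''), and then showing that this perturbation is exactly parameterized by the variables $z_{yc}$ subject to the stated constraints. The key structural input is Lemma~\ref{lem1}, which pins down the form of any optimal policy.

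First, I would invoke parts~3 and~4 of Lemma~\ref{lem1} to restrict attention, without loss of optimality, to policies satisfying $\Sc{C=\A}\geq\Sc{C=\B}$, $\Pr(D=1\mid Y=y,C=\A)=0$ for all $y<0$ with $p(y\mid\A)>0$, and $\Pr(D=1\mid Y=y,C=\B)=1$ for all $y>0$ with $p(y\mid\B)>0$. (For the demographic-parity problem~\eqref{eq:demo-parity} the same reduction applies: given any feasible policy, the usual swapping argument shows that its utility can only increase by moving all unnecessary selections in $\{y<0,C=\A\}$ and all unnecessary rejections in $\{y>0,C=\B\}$, while preserving $\Sc{C=\A}=\Sc{C=\B}$.) Under this restriction, the only degrees of freedom left are $\Pr(D=1\mid y,\A)$ for $y>0$ and $\Pr(D=1\mid y,\B)$ for $y<0$, which are exactly indexed by $\mathcal{T}_e$.

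Next, I would introduce the change of variables
\[
z_{y,\A} = \bigl(1-\Pr(D=1\mid y,\A)\bigr)\,p(y\mid\A), \quad y>0,
\]
\[
z_{y,\B} = \Pr(D=1\mid y,\B)\,p(y\mid\B), \quad y<0,
\]
so that $z_{yc}$ measures the amount of probability mass by which the policy departs from the $\lambda=0$ baseline on the pair $(y,c)\in\mathcal{T}_e$. A direct calculation would then yield
\[
\E[DY] \;=\; \E[Y_+] - \sum_{(y,c)\in\mathcal{T}_e} z_{yc}\cdot p(c)\cdot |y|,
\]
\[
\Sc{C=\A} - \Sc{C=\B} \;=\; \Delnp - \sum_{(y,c)\in\mathcal{T}_e} z_{yc},
\]
because every unit of $z_{y,\A}$ decreases $\Sc{C=\A}$ (and costs $|y|\,p(\A)$ in utility), and every unit of $z_{y,\B}$ increases $\Sc{C=\B}$ (and costs $|y|\,p(\B)$). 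The box constraints $0\leq z_{yc}\leq p(y\mid c)$ in~\eqref{constd2} follow immediately from $\Pr(D=1\mid y,c)\in[0,1]$, while the constraint~\eqref{constd1}, $\sum z_{yc}\leq\Delnp$, is precisely the condition ensuring that the resulting $\Delta$ is nonnegative (i.e., consistent with $\Sc{C=\A}\geq\Sc{C=\B}$). Plugging these identities into $\E[DY]-\lambda g(\Delta)$ gives~\eqref{eq:prob-simp}; imposing the equality $\Sc{C=\A}=\Sc{C=\B}$ gives $\sum z_{yc}=\Delnp$ and hence~\eqref{eq:demo-parity-simp}.

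For the converse direction, I would simply check that any $(z_{yc})$ satisfying~\eqref{constd1}--\eqref{constd2} (or \eqref{const1p2}--\eqref{const2p2}) can be realized by a policy of the above restricted form, by inverting the change of variables. The main obstacle is not conceptual but notational: one must keep track of the sign conventions ($y>0$ vs.\ $y<0$), remember that $y=|y|$ in one case and $y=-|y|$ in the other when computing $\E[DY]$, and verify that the restriction made in the first step is without loss of optimality for both problems~\eqref{eq:prob} and~\eqref{eq:demo-parity}. Once these routine checks are done, the equivalence follows.
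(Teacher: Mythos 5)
Your argument for the first identity \eqref{eq:prob-simp} is essentially the paper's: the same change of variables $z_{y\A}=p(y|\A)\bigl(1-\Pr(D=1|y,\A)\bigr)$ for $y>0$ and $z_{y\B}=p(y|\B)\Pr(D=1|y,\B)$ for $y<0$, the same use of parts 3 and 4 of Lemma~\ref{lem1} to justify that the restriction to policies indexed by $\mathcal{T}_e$ is without loss of optimality, and the same inversion of the map to show every feasible $(z_{yc})$ is realized by a valid policy. Where you genuinely diverge is the second identity \eqref{eq:demo-parity-simp}. The paper does not re-examine the structure of optimal demographic-parity policies at all; instead it writes the constrained problem as $\max_{p}\inf_{\lambda\geq 0}\;\E[DY]-\lambda g(\F)$, invokes Sion's minimax theorem twice to exchange $\max$ and $\inf$, and lets the inner infimum over $\lambda$ convert the inequality $\sum z_{yc}\leq\Delnp$ into the equality $\sum z_{yc}=\Delnp$. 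You instead argue directly that any optimal policy for \eqref{eq:demo-parity} also satisfies the part-4 structure ($\Pr(D=1|y,\A)=0$ for $y<0$, $\Pr(D=1|y,\B)=1$ for $y>0$), and then read off $\sum z_{yc}=\Delnp$ from $\Delta=0$. This is more elementary and avoids minimax machinery, but it is also the one place where your write-up is thinner than it should be: Lemma~\ref{lem1} is stated only for optimizers of the penalized problem \eqref{eq:prob}, so you cannot simply cite it, and the ``usual swapping'' needs a two-case compensation to preserve parity. Concretely, if $\Pr(D=1|y_0,\A)>0$ for some $y_0<0$, decreasing it breaks the constraint $\Sc{C=\A}=\Sc{C=\B}$; you must either restore $\Sc{C=\A}$ by increasing selection at some under-selected $y_1>0$ in $\A$, or, if all positive $y$ in $\A$ are already fully selected (which forces $\Sc{C=\B}=\Sc{C=\A}\geq\Pr(Y>0|\A)>\Pr(Y>0|\B)$, hence some $y_2<0$ in $\B$ is selected), decrease $\Sc{C=\B}$ at that $y_2$. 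Both repairs strictly increase utility regardless of the magnitudes $p(c)|y|$ involved, so the exchange goes through — but a naive swap that trades mass between a negative-$y$ deselection and a positive-$y$ deselection in the other group would not obviously be an improvement, so this case split is needed. With that spelled out, your route is a correct and arguably cleaner alternative to the paper's duality argument.
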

\begin{proof}

Let $p^*(d|y,c)$ be any optimal solution of the optimization problem in \eqref{eq:prob}. Define
\begin{align}
    \label{eq:substitute}
    &z_{y\A} = p(y|\A) \left(1 - p^*(D=1|Y=y, C=\A)\right) \\
    &z_{y\B} = p(y|\B) p^*(D=1|Y=y, C=\B)
    .\nonumber
\end{align}
Applying the fourth part of Lemma \ref{lem1}, then we can state the institution's utility under this solution as 
\begin{align*}
\E_{p^*}[YD] &= \sum_{y>0} y\cdot p(\A) \cdot p(y|\A) - \sum_{y>0} |y| \cdot p(\A)\cdot z_{y\A} + \sum_{y>0} y\cdot p(\B) \cdot p(y|\B) - \sum_{y<0} |y|\cdot  p(\B)\cdot z_{y\B}  \\
&= \E[Y_+] - \sum_{y<0} |y|\cdot  p(\B)\cdot z_{y\B} - \sum_{y>0} |y| \cdot p(\A)\cdot z_{y\A}
\end{align*}
Using \eqref{defDelta2} and the fourth part of Lemma \ref{lem1}, the disparity can be written as
\begin{align*}
    \Delta_{p^*} = \sum_{y>0}  p(y|\A)- \sum_{y>0} z_{y\A} - \sum_{y>0} p(y|\B) - \sum_{y<0} z_{y\B}  = \Delnp  - \sum_{y>0} z_{y\A}- \sum_{y<0} z_{y\B}.
\end{align*}
Note that the above inequality implies that
\begin{align}
    \sum_{y>0} z_{y\A}+\sum_{y<0} z_{y\B}\leq \Delnp.\label{eqnNNT}
\end{align}
We would like to apply a change of variables and write the optimization problem in \eqref{eq:prob} in terms of $z_{y\A}$ for $y>0$, and $z_{y\B}$ for $y<0$. More specifically, we use variables $z_{yc}$ where $(y,c)$ is in $\mathcal{T}_e$:
$$\mathcal{T}_e=\{(y,\A): y>0, p(y|C=\A)>0\}\cup \{(y,\B): y<0, p(y|C=\B)>0\}.$$
We claim that any $z_{yc}$ (for $(y,c) \in \mathcal{T}_e$) satisfying \eqref{constd1} and \eqref{constd2}
corresponds to some decision rule $p(d|y,c)$. Conversely, any \emph{optimal} decision rule $p^*(d|y,c)$ corresponds to some $z_{yc}$ satisfying \eqref{constd1} and \eqref{constd2}. The latter can be verified from 
\eqref{eqnNNT} and the definition given in \eqref{eq:substitute}.

Take some $z_{yc}$ satisfying \eqref{constd1} and \eqref{constd2}, and define
\begin{align}
    \label{eq:substitute2}
    p(D=1|Y=y, C=\A)=&1-\frac{z_{y\A}}{p(y|\A)} ,  &\forall y>0: p(y|C=\A)>0, \\
    p(D=1|Y=y, C=\B)=&\frac{z_{y\B} }{p(y|\B)}, &\forall y<0: p(y|C=\B)>0.\nonumber
\end{align}
Moreover, set $p(D=1|Y=y, C=\A)=0$ for $y<0$ and $p(D=1|Y=y, C=\B)=1$ for $y>0$. Since for all $d,y$ and $c$ we have $0 \leq p(D=d|Y=y,C=c) \leq 1$, this constitutes a valid decision rule. 

Thus, the optimization problem in \eqref{eq:prob} is equivalent to
\begin{align}
\maximize&\; \Uum - \sum_{(y,c) \in \mathcal{T}_e} z_{yc}\cdot p(c)\cdot |y| - \lambda \cdot g\left(\Delnp - \sum_{(y,c) \in \mathcal{T}_e} z_{yc}\right) 
\end{align}
where the maximum is over $z_{yc}$ satisfying \eqref{constd1} and \eqref{constd2}.

Next, consider the second part of the theorem. We have
\begin{align} 
&\max_{
\substack{p(d|c,y):  \\ p(D=1|C=\A) = p(D=1|C=\B)}
}\; \E[DY] = \max_{p(d|c,y)}\inf_{\lambda\geq 0}\; \E[DY] - \lambda \cdot g(\F)
\end{align}
Note that $\E[DY] - \lambda \cdot g(\F)$ is concave in $p(d|c,y)$ for any fixed $\lambda$. This follows from the fact that  $x\mapsto |x|$ is a convex function and $g(\cdot)$ is a non-decreasing convex function. Next, 
$\E[DY] - \lambda \cdot g(\F)$ is linear in $\lambda$ for any fixed $p(d|c,y)$. Moreover, the domain of  $p(d|c,y)$ is a compact set. Therefore, Sion's minimax theorem (\cite{sion_minimax}) implies that  
\begin{align*}
\max_{p(d|c,y)}\inf_{\lambda\geq 0}\; \E[DY] - \lambda \cdot g(\F)
&=
\inf_{\lambda\geq 0}\max_{p(d|c,y)}\; \E[DY] - \lambda \cdot g(\F).
\end{align*}
We can now utilize the first part of the theorem to deduce that
\begin{align*}
\inf_{\lambda\geq 0}\max_{p(d|c,y)}\; \E[DY] - \lambda \cdot g(\F)=
\inf_{\lambda\geq 0}\max_{z_{yc}}\; \E[Y_+] - \sum_{(y,c) \in \mathcal{T}_e} z_{yc}\cdot p(c)\cdot |y| - \lambda \cdot g\left(\Delnp - \sum_{(y,c) \in \mathcal{T}_e} z_{yc}\right) 
\end{align*}
where the maximum is over $z_{yc}$ satisfying \eqref{constd1} and \eqref{constd2}. 
Using a similar minimax exchange, we have
\begin{align*}
&\inf_{\lambda\geq 0}
    \max_{z_{yc}}\; \E[Y_+] - \sum_{(y,c) \in \mathcal{T}_e} z_{yc}\cdot p(c)\cdot |y| - \lambda \cdot g\left(\Delnp - \sum_{(y,c) \in \mathcal{T}_e} z_{yc}\right) 
\\&=
\max_{z_{yc}}\inf_{\lambda\geq 0}\; \E[Y_+] - \sum_{(y,c) \in \mathcal{T}_e} z_{yc}\cdot p(c)\cdot |y| - \lambda \cdot g\left(\Delnp - \sum_{(y,c) \in \mathcal{T}_e} z_{yc}\right) 
\\&=
\max_{z_{yc}: \Delnp =\sum_{(y,c) \in \mathcal{T}_e} z_{yc}} \E[Y_+] - \sum_{(y,c) \in \mathcal{T}_e} z_{yc}\cdot p(c)\cdot |y| 
\end{align*}
This completes the proof.
\end{proof}

\begin{lemma}\label{lemmanew4}
    Consider the equivalent form in \eqref{eq:demo-parity-simp} for the problem of finding the best policy that satisfies demographic parity:
\begin{align} 
&\max_{z_{yc}}\; \E[Y_+] - \sum_{(y,c) \in \mathcal{T}_e} z_{yc}\cdot p(c)\cdot |y| 
\end{align}
where the maximum is over $z_{yc}$ satisfying 
\begin{align}
   \sum_{(y,c) \in \mathcal{T}_e} z_{yc} &= \Delnp,\\
    0 \leq z_{yc} &\leq p(y|c), \qquad \forall y,c. 
\end{align}
where $\mathcal{T}_e$ is defined in \eqref{defTe}. 
Then, there is some constant $\beta_s$ such that any maximizer satisfies
$$\max_{(c,y) \in \mathcal{T}_e: z_{yc} > 0} p(c)\cdot|y|=\beta_s.
$$
Moreover, any maximizer $z_{yc}$ satisfies the following property:
\begin{align*}
    & p(c).|y| > \beta_s \implies z_{yc}=0,\\
& p(c).|y| < \beta_s \implies z_{yc} = p(y|c),
\end{align*}
implying that two maximizers can differ only on pairs $(c,y)$ where $p(c).|y|=\beta_s$.

\end{lemma}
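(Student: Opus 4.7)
The plan is to recognize the optimization as a fractional knapsack problem: since $\E[Y_+]$ is constant, maximizing the objective is equivalent to \emph{minimizing} $\sum_{(y,c)\in\mathcal{T}_e} z_{yc}\cdot p(c)\cdot|y|$, so we are distributing total mass $\Delnp$ across slots $(y,c)\in\mathcal{T}_e$, each with capacity $p(y|c)$ and per-unit cost $p(c)\cdot|y|$. The standard greedy picture — fill the cheapest slots to capacity first until the demand $\Delnp$ is met — suggests a unique threshold cost $\beta_s$: slots strictly cheaper than $\beta_s$ are saturated, slots strictly more expensive are empty, and only slots whose cost equals $\beta_s$ may be partially filled. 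The lemma is the formalization of this picture.

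The engine of the proof is a one-step exchange argument. Fix a maximizer $z$ and suppose, toward a contradiction, that there exist two pairs $(y_1,c_1),(y_2,c_2)\in\mathcal{T}_e$ with $z_{y_1c_1}>0$, $z_{y_2c_2}<p(y_2|c_2)$, and $p(c_1)\cdot|y_1|>p(c_2)\cdot|y_2|$. For a sufficiently small $\epsilon>0$, the perturbation $z_{y_1c_1}\mapsto z_{y_1c_1}-\epsilon$ and $z_{y_2c_2}\mapsto z_{y_2c_2}+\epsilon$ preserves feasibility (both the box constraints and the equality $\sum z_{yc}=\Delnp$) and strictly increases the objective by $\epsilon\bigl(p(c_1)\cdot|y_1|-p(c_2)\cdot|y_2|\bigr)>0$, contradicting optimality. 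Hence in any maximizer, the cost of any positive-mass slot is bounded above by the cost of any non-saturated slot.

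Set $\beta_s(z):=\max_{(y,c)\in\mathcal{T}_e:\,z_{yc}>0}p(c)\cdot|y|$. The exchange property immediately yields the stated implications for that particular maximizer: if $p(c)\cdot|y|<\beta_s(z)$ then $z_{yc}=p(y|c)$ (otherwise a swap with any slot attaining $\beta_s(z)$ contradicts Step~2), and by the definition of $\beta_s(z)$, $p(c)\cdot|y|>\beta_s(z)$ forces $z_{yc}=0$. It remains to show that $\beta_s(z)$ is the same across all maximizers. I would do this by identifying $\beta_s$ intrinsically from the problem data. Consider the cumulative function $F(\tau):=\sum_{(y,c)\in\mathcal{T}_e:\,p(c)|y|\leq\tau}p(y|c)$, which is non-decreasing and piecewise constant on the finite set $\{p(c)|y|:(y,c)\in\mathcal{T}_e\}$, with $F(+\infty)=\Pr(Y>0|C=\A)+\Pr(Y<0|C=\B)=1+\Delnp\geq\Delnp$. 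So there is a unique value $\beta$ with $F(\beta^-)\leq\Delnp\leq F(\beta)$. For any maximizer $z$, the characterization just established forces the total mass on slots of cost strictly less than $\beta_s(z)$ to equal $F(\beta_s(z)^-)$, and this must be at most $\Delnp$; similarly the total mass on slots of cost at most $\beta_s(z)$ equals $\Delnp$, forcing $F(\beta_s(z))\geq\Delnp$. By the uniqueness of $\beta$, $\beta_s(z)=\beta$ independent of $z$.

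The main obstacle I anticipate is precisely this last step — showing $\beta_s$ is the same for all optimal solutions. A naive direct two-maximizer cost comparison runs into a case split depending on whether the cumulative $F$ is tight at the threshold, i.e.\ whether slots of cost $\beta$ are necessarily fully filled. I sidestep this by extracting $\beta$ purely from the data $\bigl(p(y|c),p(c),\Delnp\bigr)$ via the monotone cumulative $F$, so that every optimizer's $\beta_s(z)$ must coincide with this data-determined value, after which the rest of the conclusion is an immediate restatement of Step~3.
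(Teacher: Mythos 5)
Your exchange argument and the two implications you derive from it coincide with the paper's proof, so the first half of your proposal matches the paper essentially verbatim. Where you diverge is the uniqueness of $\beta_s$ across maximizers. The paper argues by directly comparing two maximizers $z$ and $z'$: if $\beta_s(z')>\beta_s(z)$, then every slot with $z_{yc}>0$ has cost strictly below $\beta_s(z')$ and is therefore saturated under $z'$, giving $z'_{yc}\geq z_{yc}$ for all $(y,c)$; since both solutions sum to $\Delnp$ they must coincide, a contradiction. Your route instead identifies the threshold intrinsically through the cumulative capacity function $F$. Both work in principle, but yours as written has one imprecision you should repair: the value $\beta$ satisfying $F(\beta^-)\leq\Delnp\leq F(\beta)$ is \emph{not} unique in the boundary case where $\Delnp$ equals $F(\tau)$ exactly for some cost level $\tau$ --- then every $\beta$ in the closed interval from $\tau$ to the next cost level satisfies both non-strict inequalities, so the sentence ``there is a unique value $\beta$'' is false and the final deduction ``by the uniqueness of $\beta$'' does not go through as stated. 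This is exactly the boundary case you said you were sidestepping, and the sidestep as written does not avoid it. The fix is already available from your own setup: because $\beta_s(z)$ is by definition attained by a slot carrying strictly positive mass, the total mass on slots of cost strictly below $\beta_s(z)$ is strictly less than $\Delnp$, i.e.\ $F(\beta_s(z)^-)<\Delnp\leq F(\beta_s(z))$, and the cost level satisfying this pair with the left inequality strict \emph{is} unique (if two levels $\tau_i<\tau_j$ both satisfied it, then $\Delnp>F(\tau_j^-)\geq F(\tau_i)\geq\Delnp$). With that one strict inequality inserted your argument is complete; the paper's pointwise-domination comparison reaches the same conclusion without introducing the cumulative function and is somewhat shorter.
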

\begin{proof}
We begin by showing that for any arbitrary $(y,c)$ and $(y',c')$ in $\mathcal{T}_e$ such that $z_{yc}>0$ and $p(c')\cdot |y'| < p(c)\cdot |y|$ we have 
$z_{y'c'} = p(y'|c')$.
This property must hold, since otherwise we can choose some $0 < \epsilon < z_{yc}$ and decrease $z_{yc}$ by $\epsilon$ and increase $z_{y'c'}$ by the same value. This move will keep $\Delta$ unchanged but decreases $\sum_{(y,c) \in \mathcal{T}_e} z_{yc}\cdot p(c)\cdot |y|$ which violates the optimality assumption.

For every optimizer $z_{y,c}$, let us define:
$$\beta_s(z_{y,c})=\max_{(c,y) \in \mathcal{T}_e: z_{yc} > 0} p(c)\cdot|y|.
$$
It follows from the above property and the definition of $\beta_s(z_{y,c})$ that
\begin{align*}
    & p(c).|y| > \beta_s(z_{y,c}) \implies z_{yc}=0,\\
& p(c).|y| < \beta_s(z_{y,c}) \implies z_{yc} = p(y|c),
\end{align*}
It remains to show that $\beta_s(z_{y,c})=\beta_s(z'_{y,c})$
for any two maximizers $z_{y,c}$ and $z'_{y,c}$.
Assume otherwise that 
$\beta_s(z'_{y,c})>\beta_s(z_{y,c})$.
Then, we obtain that  $$z_{yc}>0\implies
p(c).|y| \leq  \beta_s(z_{y,c})\implies
p(c).|y| <\beta_s(z'_{y,c})\implies z'_{yc}=p(y|c)\geq z_{yc}.$$
Moreover, $z'_{yc}\geq z_{yc}$ also holds when $z_{yc}=0$. Thus, $z'_{yc}\geq z_{yc}$ holds regardless of the value of $z_{yc}$.  
On the other hand
\[
\sum_{(y,c) \in \mathcal{T}_e} z'_{yc} = \sum_{(y,c) \in \mathcal{T}_e} z_{yc} = \Delnp
\]
Therefore, we must have 
$z'_{yc}=z_{yc}$ for all $(y,c)\in\mathcal{T}_e$, which is in contradiction with $\beta_s(z'_{y,c})>\beta_s(z_{y,c})$.
\end{proof}
\subsection{Proof of Theorem \ref{th:effective}}\label{appndxproofth:effective}

\begin{proof}[Proof of Theorem \ref{th:effective}] 
Take an arbitrary  
\[
(y^*, c^*) \in \argmin_{(y,c)\in\mathcal{T}_e} p(c)\cdot|y|.
\]
Using the assumption $\Delnp > 0$, if $\lambda \cdot g'_-(\Delnp) > \beta_e$ we conclude that there exists $0 < \epsilon < \min\{\beta_e,\, \Delnp,\, p(y^*|c^*)\}$ such that
\[
\lambda (g(\Delnp) - g(\Delnp - \epsilon)) > \epsilon\cdot\beta_e.
\]
Thus,
\begin{align}
\Uum - \epsilon\cdot\beta_e - \lambda g(\Delnp - \epsilon) > \Uum - \lambda g(\Delnp).\label{eqnuep}
\end{align}

Consider the representation in Lemma \ref{lemma-represent} and the optimization 
\begin{align}
    \maximize&\; \Uum - \sum_{(y,c) \in \mathcal{T}_e} z_{yc}\cdot p(c)\cdot |y| - \lambda \cdot g\left(\Delnp - \sum_{(y,c) \in \mathcal{T}_e} z_{yc}\right) \label{eq:prob-simprep}
\end{align}
where the maximum is over $z_{yc}$ satisfying \eqref{constd1} and \eqref{constd2}.
Let $z'_{yc}$ be defined by
\begin{align*}
    z'_{yc}= \begin{cases}
        \epsilon & \text{if }(y,c) = (y^*, c^*) \\
        0 & \text{otherwise.} \\
        \end{cases}
\end{align*} 
One can directly inspect that the variables $z'_{yc}$ satisfy the constraints \eqref{constd1} and \eqref{constd2}. Thus, the maximum value of \eqref{eq:prob-simprep} is greater than or equal to the expression evaluated at $z'_{yc}$. In other words, if $\tilde z_{yc}$ is any maximizer in \eqref{eq:prob-simprep}, we have:
\begin{align}
\Uum& - \sum_{(y,c) \in \mathcal{T}_e} \tilde z_{yc}\cdot p(c)\cdot |y| - \lambda \cdot g\left(\Delnp - \sum_{(y,c) \in \mathcal{T}_e} \tilde z_{yc}\right)\nonumber \\
\geq&~ \Uum - z'_{y^* c^*}\cdot \beta_e - \lambda \cdot g\left(\Delnp - z'_{y^* c^*}\right) \nonumber\\
>&~ \Uum - \lambda \cdot g(\Delnp)\label{eqntoexpeqnuep}
\end{align}
where \eqref{eqntoexpeqnuep} follows from \eqref{eqnuep}. 
This in turn implies $g(\Delnp - \sum_{(y,c) \in \mathcal{T}_e} \tilde z_{yc}) < g(\Delnp)$. Therefore, for any optimal solution, we have  $\Delta < \Delnp$, and the penalty is effective.

On the other hand, assume that $\lambda \cdot g'_-(\Delnp) \leq \beta_e$. We show that the utility-maximizing solution is optimal for the problem in \eqref{eq:prob-simprep}. We need to show that for any arbitrary $z_{yc}$, we have
\begin{align}
    \nonumber&\Uum - \lambda \cdot g(\Delnp)  \geq \Uum - \sum_{(y,c) \in \mathcal{T}_e} z_{yc}\cdot p(c)\cdot |y| - \lambda \cdot g\left(\Delnp - \sum_{(y,c) \in \mathcal{T}_e} z_{yc}\right).
    \end{align}
Note that
\begin{align}
    \nonumber&\left[\Uum - \lambda \cdot g(\Delnp) \right] - \left[\Uum - \sum_{(y,c) \in \mathcal{T}_e} z_{yc}\cdot p(c)\cdot |y| - \lambda \cdot g\left(\Delnp - \sum_{(y,c) \in \mathcal{T}_e} z_{yc}\right) \right]\\
    &=\nonumber \lambda \left[ g\left(\Delnp - \sum_{(y,c) \in \mathcal{T}_e} z_{yc}\right) -  g(\Delnp)\right]  + \sum_{(y,c) \in \mathcal{T}_e} z_{yc}\cdot p(c)\cdot |y|\\
     &\geq\nonumber \lambda \left[ g\left(\Delnp - \sum_{(y,c) \in \mathcal{T}_e} z_{yc}\right) -  g(\Delnp)\right]  + \beta_e \sum_{(y,c) \in \mathcal{T}_e} z_{yc}
     \\&\geq  - \lambda \cdot g'_-(\Delnp) \sum_{(y,c) \in \mathcal{T}_e} z_{yc}  + \beta_e \sum_{(y,c) \in \mathcal{T}_e} z_{yc}\label{eqnxnex}
     \\&= \left(\beta_e- \lambda \cdot g'_-(\Delnp)   \right)\sum_{(y,c) \in \mathcal{T}_e} z_{yc}\nonumber\\&
     \geq 0.\label{eqnxnex2}
\end{align}
where \eqref{eqnxnex} follows from the property $g(x) - g(y) \geq g'(y)(x-y)$ of convex functions, and \eqref{eqnxnex2} follows from  $\sum_{(y,c) \in \mathcal{T}_e} z_{yc} \geq 0$ and the assumption $\lambda \cdot g'_-(\Delnp) \leq \beta_e$. Thus, we obtain that the utility-maximizing solution is optimal in this case.

\end{proof}

\subsection{Proof of Theorem \ref{th:fulldp}}
\label{appndxproofth:fulldp}

\begin{proof}[Proof of Theorem \ref{th:fulldp}]

Assume that $\beta_s \leq \lambda\cdot g'_+(0)$. Take an arbitrary optimal solution to \eqref{eq:demo-parity-simp} and denote it by $z_{yc}$. Note that $z_{yc}$ is in the domain of the optimization problem in \eqref{eq:prob-simp}. We would like to show that $z_{yc}$ is also a maximizer for the problem in \eqref{eq:prob-simp}. This will prove that the penalty is fully satisfactory, meaning there is a policy satisfying demographic parity which is also optimal.

From the identification in \eqref{eq:substitute2}, the value $\beta_s$ introduced in the theorem can be stated as
$$\beta_s = \max_{(c,y) \in \mathcal{T}_e: z_{yc} > 0} p(c)\cdot|y| .
$$
Note that Lemma \ref{lemmanew4} shows that this quantity is the same for all maximizers of the \eqref{eq:demo-parity-simp}. Take any maximizer for \eqref{eq:prob-simp} denoted by $z'_{yc}$. 
We need to prove that the objective function of \eqref{eq:prob-simp} at $z_{yc}$ is not smaller than that at $z'_{yc}$, i.e.,
\begin{align*}
    &\Uum - \sum_{(y,c) \in \mathcal{T}_e} z_{yc}\cdot p(c)\cdot |y| - \lambda \cdot g(0)  \geq \Uum - \sum_{(y,c) \in \mathcal{T}_e} z'_{yc}\cdot p(c)\cdot |y| - \lambda \cdot g\left(\Delnp - \sum_{(y,c) \in \mathcal{T}_e} z'_{yc}\right) \end{align*}
This would show that $z_{yc}$ is a maximizer for \eqref{eq:prob-simp}.

We have:
\begin{align}
    \nonumber&\left[\Uum - \sum_{(y,c) \in \mathcal{T}_e} z_{yc}\cdot p(c)\cdot |y| - \lambda \cdot g(0) \right] - \left[\Uum - \sum_{(y,c) \in \mathcal{T}_e} z'_{yc}\cdot p(c)\cdot |y| - \lambda \cdot g\left(\Delnp - \sum_{(y,c) \in \mathcal{T}_e} z'_{yc}\right) \right]\\\nonumber
    =& \left[\sum_{(y,c) \in \mathcal{T}_e} z'_{yc}\cdot p(c)\cdot |y| - \sum_{(y,c) \in \mathcal{T}_e} z_{yc}\cdot p(c)\cdot |y| \right] + \lambda \left[ g\left(\Delnp - \sum_{(y,c) \in \mathcal{T}_e} z'_{yc}\right) - g(0) \right]\\
     \geq& \left[\sum_{(y,c) \in \mathcal{T}_e} z'_{yc} - \sum_{(y,c) \in \mathcal{T}_e} z_{yc} \right] \beta_s + \lambda \left[g\left(\Delnp - \sum_{(y,c) \in \mathcal{T}_e} z'_{yc}\right) - g(0)\right] \label{eqnfdr1}\\
    \geq& \left[\sum_{(y,c) \in \mathcal{T}_e} z'_{yc} - \sum_{(y,c) \in \mathcal{T}_e} z_{yc} \right] \beta_s + \lambda  \left[\Delnp - \sum_{(y,c) \in \mathcal{T}_e} z'_{yc} \right] g'_+(0)\label{eqnfdr2}
    \\
    =&  \left[\Delnp - \sum_{(y,c) \in \mathcal{T}_e} z'_{yc} \right] (\lambda\cdot g'_+(0)-\beta_s)\nonumber
    \\\geq& ~0.\label{eqnfdr3}
\end{align}
The inequality \eqref{eqnfdr2} follows from the convexity of $g(\cdot)$ and \eqref{eqnfdr3} follows from 
$\beta_s \leq \lambda\cdot g'_+(0)$ and $\Delnp - \sum_{(y,c) \in \mathcal{T}_e} z'_{yc} \geq 0$. The inequality \eqref{eqnfdr1} follows from 
$$(z'_{yc} - z_{yc})\cdot p(c)\cdot |y| \geq (z'_{yc} - z_{yc})\cdot \beta_s, \qquad\forall (c,y)\in\mathcal{T}_e$$
which can be argued as follows:  Lemma \ref{lemmanew4} shows that
\begin{align*}
    & p(c).|y| > \beta_s \implies z_{yc}=0,\\
& p(c).|y| < \beta_s \implies z_{yc} = p(y|c).
\end{align*}
Take some $(c,y)\in\mathcal{T}_e$. If $p(c)\cdot |y|>\beta_s$, we have $z_{yc}=0$ and hence
$$z'_{yc}\cdot p(c)\cdot |y| - z_{yc}\cdot p(c)\cdot |y|\geq \left[z'_{yc} - z_{yc} \right] \beta_s$$
Next, if $p(c)\cdot |y|=\beta_s$, we have
$$z'_{yc}\cdot p(c)\cdot |y| - z_{yc}\cdot p(c)\cdot |y|=\left[z'_{yc} - z_{yc} \right] \beta_s.$$
Finally, if $p(c)\cdot |y|<\beta_s$, we have $z_{yc}=p(y|c)$ and hence $z'_{yc}-z_{yc}\leq 0$ which implies that
\[
 (z'_{yc} - z_{yc})\cdot p(c)\cdot |y| \geq (z'_{yc} - z_{yc})\cdot \beta_s.
\]
This completes the proof for one direction.

Next, assume that $\beta_s > \lambda\cdot g'_+(0)$. Take an optimal solution $z_{yc}$ to the problem \eqref{eq:demo-parity-simp}. Let 
\[
(c^*,y^*) \in \argmax_{y,c} \{p(c)\cdot|y| : 
(c,y) \in \mathcal{T}_e \land  z_{yc} > 0) \} 
\]
Find $\epsilon>0$ such that $z_{y^*c^*} > \epsilon$ and
\begin{align}
\beta_s \cdot \epsilon > \lambda (g(\epsilon) - g(0)).\label{eqnepla}
\end{align}
Define 
\begin{align*}
    z'_{yc}= \begin{cases}
        z_{yc} - \epsilon & \text{if }(y,c) = (y^*, c^*) \\
        z_{yc} & \text{otherwise.} \\
        \end{cases}
\end{align*} 
The assignment $z'_{yc}$ satisfies \eqref{constd1} and \eqref{constd2} and belongs to the domain of the optimization problem in \eqref{eq:prob-simp}.
Note that \eqref{eqnepla} implies that
$$-z'_{y^* c^*}\cdot p(c^*)\cdot |y^*| - \lambda \cdot g(\epsilon)  > -z_{y^* c^*}\cdot p(c^*)\cdot |y^*| - \lambda \cdot g(0).$$
Therefore,
\begin{align*}
    &\Uum - \sum_{(y,c) \in \mathcal{T}_e} z'_{yc}\cdot p(c)\cdot |y| - \lambda \cdot g\left(\Delnp - \sum_{(y,c) \in \mathcal{T}_e} z'_{yc}\right)  \\&> \Uum - \sum_{(y,c) \in \mathcal{T}_e} z_{yc}\cdot p(c)\cdot |y| - \lambda \cdot g\left(\Delnp - \sum_{(y,c) \in \mathcal{T}_e} z_{yc}\right).
\end{align*}
Thus, $z_{yc}$ is not a maximizer for \eqref{eq:prob-simp} and the penalty is not fully satisfactory.

\end{proof}

\subsection{Proof of Theorem \ref{th:um}}\label{appndxproofth:util-max}

\begin{proof}[Proof of Theorem \ref{th:um}]
    Note that when $\lambda =0$ the optimal decision at time $t$ will be to select an individual if and only if $Y_t>0$ (remember the assumption that $0\notin\mathcal{Y}$, i.e., hiring each individual is either beneficial or harmful).   
    \footnote{Note that the optimal decision in this case only depends on $Y_t$ and not the distribution of $p(c,y_t)$ (as in the general case of $\lambda>0$).}  So for each $c \in \G$ we can write
    \begin{align*}
        \forall y\in\mathcal{Y}: \; \Pqt{t+1}{y}{c} 
        &= \sum_{y'\in\mathcal{Y}} \sum_{d\in\{0,1\}} \Pqt{t}{y'}{c} \cdot \Pr(D=d|Y=y',C=c) \cdot q(y|y',d) \\
        &= \sum_{y'<0} \Pqt{t}{y'}{c} \cdot q(y|y',0) + \sum_{y'>0} \Pqt{t}{y'}{c} \cdot q(y|y',1).
    \end{align*}
    So, if we define
    \[
        q^*(y|y') = \begin{cases}
        q(y|y',1)&\text{if } y'>0, \\
        q(y|y',0)& \text{if }y'<0, \\
        \end{cases}
    \] 
    we have
    \[
    \forall y\in\mathcal{Y}: \; \Pqt{t+1}{y}{c} = \sum_{y'} \Pqt{t}{y'}{c} \cdot q^*(y|y').
    \]
    Therefore, the updates of qualifications within each group form a discrete-time, finite support, time-homogenous Markov chain in which their transition probabilities are identical, given by $\Pr(Y_{t+1}=y|Y_t=y') = q^*(y|y')$. The assumption $q(y|y',d) >0$ for all $y,y'$ and $d$ implies $q^*(y|y') > 0$ for all $y$ and $y'$. This, in turn, implies that these Markov chains are aperiodic and irreducible and have an identical, unique, absorbing stationary distribution. So, regardless of the initial distributions $p_{Y_0|A}$ and $p_{Y_0|B}$, we have
    \[
    \lim_{t \rightarrow \infty} \|p_{Y_t|A}-p_{Y_t|B}\|_1=0.
    \]
    Next, for the case of $\lambda > 0$, consider Example \ref{example1}. In this example, we consider $\mathcal{Y} = \{-2, -1, 2\}$ and update probabilities presented in Figure \ref{fig:tran}. Assume that the groups $\A$ and $\B$ are of equal size $p(C=\A) = p(C=\B)=0.5$ and the discrimination penalty is $g(x)=x$. Some plots are given in Example \ref{example1} in Figure \ref{fig:plot_discrimination}. Here, we provide further calculations that explain the behavior described in  Figure \ref{fig:plot_discrimination} for $\lambda = 0.7$.
    
    If we index the elements of $\mathcal{Y}$ in increasing order, a stationary state of the system is as follows:
    \begin{align}
        p_{Y|\A} = \left[\frac{1}{3}, \frac{1}{10}, \frac{17}{30}\right] \nonumber \\
        p_{Y|\B} = \left[\frac{17}{30}, \frac{1}{10}, \frac{1}{3}\right]. 
    \label{eq:state-disparity}
    \end{align}
    In this stationary state, the institution's selection policy will be
    \begin{align}
        p_{D=1|Y_t,\A} = [0, 0, 1], \label{eqnspT1}\\
        p_{D=1|Y_t,\B} = [0, 1, 1], \label{eqnspT2}
    \end{align}
    which will result in $\Delta_t = \left|\frac{17}{30} - \frac{1}{10} - \frac{1}{3} \right| = \frac{4}{30}\approx 0.133...$. 
    Now, consider the initial state as
    \begin{align*}
        p_{Y_0|\A} = [0.3, 0.1, 0.6],\\
        p_{Y_0|\B} = [0.5, 0.1, 0.4].
    \end{align*}
    In this case, the initial selection disparity equals $\Delta_0 = 0.1<\frac{4}{30}$.
    The selection policy, in this case, is again the one given in \eqref{eqnspT1} and \eqref{eqnspT2} for all $t>0$  and the qualification distributions converge to the stationary state 
    \eqref{eq:state-disparity}. Therefore, 
$\lim_{t\rightarrow\infty}\Delta_t=\frac{4}{30}>\Delta_0 = 0.1$ meaning that
    the selection disparity will increase over time. This is illustrated in Figure \ref{fig:plot_discrimination} for the choice of $\lambda=0.7$.
\end{proof}

\subsection{Proof of Theorem \ref{th:penalty-conv}}\label{appndxproofth:penalty-conv}

\begin{proof}[Proof of Theorem \ref{th:penalty-conv}] 
    Fix some time instance $t$. Without loss of generality, assume \begin{align}
        \Pr(Y_t>0|C=\A) \geq \Pr(Y_t>0|C=\B)\label{eqnnewassumption}.
    \end{align} Otherwise, if the above assumption does not hold, we can just swap the names of the two groups. 
    
    Define $\alpha = \min_{y,y',d} \{q(y|y',d)\}$ and 
    \[f(y',d) =
        \Pqt{t}{y'}{\A}\cdot \Pr(D_t=d|Y_t=y',C=\A) - \Pqt{t}{y'}{\B}\cdot \Pr(D_t=d|Y_t=y',C=\B).
        \]
    First, using simple algebraic (in)equalities we show that
    \begin{align}
\sum_{y\in\mathcal{Y}} | \Pqt{t+1}{y}{\A} - \Pqt{t+1}{y}{\B}| \leq (1 - |\mathcal{Y}| \alpha) \sum_{y'\in\mathcal{Y}} \sum_{d\in\{0,1\}}
          \left|f(y',d)\right|.\label{eqneqnn1}
      \end{align}
    Then, by utilizing Lemma \ref{lem1} in the institution's policy at time $t$ we deduce
    \begin{align}
    \sum_{y'\in\mathcal{Y}}
          \left|f(y',1)\right| \leq \sum_{y'} \left| \Pqt{t}{y'}{\A}- \Pqt{t}{y'}{\B} \right|\label{eqnsecpart}
    \end{align}
    and similarly
    \begin{align}
    \sum_{y'\in\mathcal{Y}}
          \left|f(y',0)\right| \leq \sum_{y'} \left| \Pqt{t}{y'}{\A}- \Pqt{t}{y'}{\B} \right|.\label{eqnsecpart2}
    \end{align}
    Combining them we get the desired result
    \begin{gather*}
        \sum_{y\in\mathcal{Y}} | \Pqt{t+1}{y}{\A} - \Pqt{t+1}{y}{\B}| \leq (1 - |\mathcal{Y}| \alpha) \sum_{y'\in\mathcal{Y}} \sum_{d\in\{0,1\}} 
          \left|f(y',d)\right| \leq 2 (1 - |\mathcal{Y}| \alpha) \sum_{y\in\mathcal{Y}} | \Pqt{t}{y}{\A} - \Pqt{t}{y}{\B}|.
    \end{gather*}
    Consequently, if   $2 (1 - |\mathcal{Y}| \alpha) < 1$, then
    \begin{gather*}
        \lim_{t \rightarrow \infty} \sum_{y\in\mathcal{Y}} | \Pqt{t}{y}{\A} - \Pqt{t}{y}{\B}| = 0.
    \end{gather*}

    To show \eqref{eqneqnn1}, let us denote
    $$q^*(y|y',d) = \frac{q(y|y',d) - \alpha}{1 - |\mathcal{Y}|\alpha}.$$
    Then, we can write:
    \begin{align}
        &\sum_{y\in\mathcal{Y}} | \Pqt{t+1}{y}{\A} - \Pqt{t+1}{y}{\B}| \nonumber\\
        =& \sum_{y\in\mathcal{Y}} \left| \sum_{y'\in\mathcal{Y}} \sum_{d\in\{0,1\}}
        \left[
        \Pqt{t}{y'}{\A}\cdot \Pr(D_t=d|Y_t=y',C=\A) \cdot q(y|y',d) - \Pqt{t}{y'}{\B}\cdot \Pr(D_t=d|Y_t=y',C=\B) \cdot q(y|y',d) \right] \right| \nonumber\\
         =& \sum_{y\in\mathcal{Y}} \left| \sum_{y'\in\mathcal{Y}} \sum_{d\in\{0,1\}}
         q(y|y',d) f(y',d)\right| \nonumber\\
         =& \sum_{y\in\mathcal{Y}} \left| \sum_{y'\in\mathcal{Y}} \sum_{d\in\{0,1\}}
         (1 - |\mathcal{Y}| \alpha)q^*(y|y',d) f(y',d)\right| \label{eqneq2}\\
         \leq& (1 - |\mathcal{Y}| \alpha) \sum_{y\in\mathcal{Y}} \sum_{y'\in\mathcal{Y}} \sum_{d\in\{0,1\}}
         q^*(y|y',d) \left|f(y',d)\right| \nonumber\\
         =& (1 - |\mathcal{Y}| \alpha) \sum_{y'\in\mathcal{Y}} \sum_{d\in\{0,1\}}
          \left|f(y',d)\right| \sum_{y\in\mathcal{Y}} q^*(y|y',d) \nonumber\\
         =& (1 - |\mathcal{Y}| \alpha) \sum_{y'\in\mathcal{Y}} \sum_{d\in\{0,1\}}
          \left|f(y',d)\right|
          \label{pr:penalty-conv2}
    \end{align}
    where \eqref{eqneq2} follows from 
    \begin{gather*}
\sum_{y'\in\mathcal{Y}} \sum_{d\in\{0,1\}} f(y',d) = \sum_{y'\in\mathcal{Y}} 
        \left[
        \Pqt{t}{y'}{\A} - \Pqt{t}{y'}{\B} \right] = 0
    \end{gather*}
    and \eqref{pr:penalty-conv2} follows from 
    \begin{gather*}
        \sum_{y\in\mathcal{Y}} q^*(y|y',d) = \sum_{y\in\mathcal{Y}}  \frac{q(y|y',d) - \alpha}{1 - |\mathcal{Y}|\alpha} = \frac{1}{1 - |\mathcal{Y}|\alpha} - \frac{|\mathcal{Y}|\alpha}{1 - |\mathcal{Y}|\alpha} = 1.
    \end{gather*}
Next, we show \eqref{eqnsecpart}. Define 
$$y_\A = \max_y \{y:~\forall y' < y\quad p_t(Y_t=y'|C=\A)>0 \implies \Sct{Y_t=y', C=\A} = 0\}$$
for the decision rule of the institution at time $t$. The 4th statement in Lemma \ref{lem1} along with the assumption that $0\notin\mathcal{Y}$ implies $y_\A > 0$. Furthermore, the lemma yields
    ~
    \begin{align*}
        \;p_t(y_\A|\B) \cdot \Sct{Y_t=y_\A, C=\B}& = p_t(y_\A|\B), \\
        \forall y > y_\A, c \in \{\A,\B\}: \; p_t(y|c) \cdot \Sct{Y_t=y, C=c} &= p_t(y|c), \\
        \forall y < y_\A: \; p_t(y|\A) \Sct{Y_t=y, C=\A} &= 0. 
    \end{align*}
    ~
    Using these facts we can write
    ~
    \begin{align}
        \sum_{y'\in\mathcal{Y}}
          \left|f(y',1)\right| =& 
          \sum_{y'\in\mathcal{Y}} \left| \Pqt{t}{y'}{\A}\cdot \Sct{Y_t=y',C=\A} - \Pqt{t}{y'}{\B}\cdot \Sct{Y_t=y',C=\B} \right| \nonumber \\
          =& \sum_{y' > y_\A} \left| \Pqt{t}{y'}{\A}- \Pqt{t}{y'}{\B}\right| \nonumber \\
          &+ \sum_{y' < y_\A} \Pqt{t}{y'}{\B} \cdot\Sct{Y_t=y',C=\B} \nonumber \\
          &+ \left| \Pqt{t}{y_\A}{\A}\cdot \Sct{Y_t=y_\A,C=\A} - \Pqt{t}{y_\A}{\B} \right| \nonumber \\
          \leq& \sum_{y' > y_\A} \left| \Pqt{t}{y'}{\A}- \Pqt{t}{y'}{\B}\right| \nonumber \\
                    &+ \Pqt{t}{y_\A}{\B} \cdot \Sctinv{Y_t=y_\A,C=\A} \nonumber\\
          &+ \sum_{y' < y_\A} \Pqt{t}{y'}{\B} \cdot\Sct{Y_t=y',C=\B} \nonumber \\
          &+ \left| \Pqt{t}{y_\A}{\A} - \Pqt{t}{y_\A}{\B} \right| \cdot \Sct{Y_t=y_\A,C=\A} 
\label{pr:penalty-conv1}
    \end{align}
where \eqref{pr:penalty-conv1} follows from the triangle's inequality.

    On the other hand, part 3 of Lemma \ref{lem1} along with the assumption in \eqref{eqnnewassumption} shows that 
    \begin{gather*}
        \sum_{y'\in\mathcal{Y}} \Pqt{t}{y'}{\A}\cdot \Sct{Y_t=y',C=\A} \geq \sum_{y'\in\mathcal{Y}} \Pqt{t}{y'}{\B}\cdot \Sct{Y_t=y',C=\B}.
    \end{gather*}    
    We can expand it as
    \begin{align*}
 &
        \Pqt{t}{y_\A}{\B} +
        \sum_{y' < y_\A } \Pqt{t}{y'}{\B} \cdot\Sct{Y_t=y',C=\B}  
        + \sum_{y' > y_\A}  \Pqt{t}{y'}{\B} 
        \\
         &\leq\Pqt{t}{y_\A}{\A} \cdot \Sct{Y_t=y_\A,C=\A} +
        \sum_{y' > y_\A}  \Pqt{t}{y'}{\A}
    \end{align*}
    which implies
    \begin{align}
       & \Pqt{t}{y_\A}{\B} \cdot \Sctinv{Y_t=y_\A,C=\A} +\sum_{y' < y_\A} \Pqt{t}{y'}{\B} \cdot\Sct{Y_t=y',C=\B}\nonumber\\
         &\leq  \left( \Pqt{t}{y_\A}{\A} - \Pqt{t}{y_\A}{\B} \right) \cdot \Sct{Y_t=y_\A,C=\A}  
        + \sum_{y' > y_\A} \left[ \Pqt{t}{y'}{\A}- \Pqt{t}{y'}{\B} \right]\nonumber
        \\&=\left( \Pqt{t}{y_\A}{\B} - \Pqt{t}{y_\A}{\A} \right) \cdot \Sctinv{Y_t=y_\A,C=\A} + \sum_{y' < y_\A} \left( \Pqt{t}{y'}{\B}- \Pqt{t}{y'}{\A} \right)\label{eqn16p1}
        \\&
           \leq \left| \Pqt{t}{y_\A}{\B} - \Pqt{t}{y_\A}{\A} \right| \cdot \Sctinv{Y_t=y_\A,C=\A} + \sum_{y' < y_\A} \left| \Pqt{t}{y'}{\B}- \Pqt{t}{y'}{\A} \right|\label{eqn16p2}
    \end{align}
    where \eqref{eqn16p1} follows from the fact that for $c \in \G$ we have
    \[
     \Pqt{t}{y_\A}{c} \cdot \Sct{Y_t=y_\A,C=c} + \sum_{y' > y_\A} \Pqt{t}{y'}{c} = 1 - \Pqt{t}{y_\A}{c} \cdot \Sctinv{Y_t=y_\A,C=c} - \sum_{y' < y_\A} \Pqt{t}{y'}{c}.
    \]
    From \eqref{eqn16p2} and \ref{pr:penalty-conv1} we get
\begin{align*}
           \sum_{y'\in\mathcal{Y}}
          \left|f(y',1)\right| 
          &\leq  \sum_{y' > y_\A} \left| \Pqt{t}{y'}{\A}- \Pqt{t}{y'}{\B}\right| \nonumber \\
                    &\qquad+\left| \Pqt{t}{y_\A}{\B} - \Pqt{t}{y_\A}{\A} \right| \cdot \Sctinv{Y_t=y_\A,C=\A} + \sum_{y' < y_\A} \left| \Pqt{t}{y'}{\B}- \Pqt{t}{y'}{\A} \right|\\
          &\qquad+ \left| \Pqt{t}{y_\A}{\A} - \Pqt{t}{y_\A}{\B} \right| \cdot \Sct{Y_t=y_\A,C=\A}
 \\
            &= \sum_{y'} \left| \Pqt{t}{y'}{\A}- \Pqt{t}{y'}{\B} \right|.
    \end{align*}
This completes the proof for \eqref{eqnsecpart}.

    It remains to prove \eqref{eqnsecpart2}, i.e., to show that
    \[
    \sum_{y'\in\mathcal{Y}}
          \left|f(y',0)\right| \leq \sum_{y'} \left| \Pqt{t}{y'}{\A}- \Pqt{t}{y'}{\B} \right|.
    \]
    We follow a similar approach. Define 
    $$y_\B = \min_y \{y:~\forall y' > y\quad p_t(Y_t=y'|C=\B)>0 \implies \Sct{Y_t=y', C=\B} = 1\}.$$
    The 4th statement in Lemma \ref{lem1} along with the assumption that $0\notin\mathcal{Y}$ implies $y_\B < 0$. Furthermore, the lemma yields
    ~
    \begin{align*}
        \;p_t(y_\B|\A) \cdot \Sctinv{Y_t=y_\B, C=\A}& = p_t(y_\B|\A), \\
        \forall y < y_\B, c \in \{\A,\B\}: \; p_t(y|c) \cdot \Sctinv{Y_t=y, C=c} &= p_t(y|c), \\
        \forall y > y_\B: \; p_t(y|\B) \cdot \Sctinv{Y_t=y, C=\B} &= 0. 
    \end{align*}

    Now, these imply:
    
    \begin{align}
        \sum_{y'\in\mathcal{Y}}
          \left|f(y',0)\right| =& 
          \sum_{y'\in\mathcal{Y}} \left| \Pqt{t}{y'}{\A}\cdot \Sctinv{Y_t=y',C=\A} - \Pqt{t}{y'}{\B}\cdot \Sctinv{Y_t=y',C=\B} \right| \nonumber \\
          =& \sum_{y' < y_\B} \left| \Pqt{t}{y'}{\A}- \Pqt{t}{y'}{\B}\right| \nonumber \\
          &+ \sum_{y' > y_\B} \Pqt{t}{y'}{\A} \cdot\Sctinv{Y_t=y',C=\A} \nonumber \\
          &+ \left| \Pqt{t}{y_\B}{\B}\cdot \Sctinv{Y_t=y_\B,C=\B} - \Pqt{t}{y_\B}{\A} \right| \nonumber \\
          \leq& \sum_{y' < y_\B} \left| \Pqt{t}{y'}{\A}- \Pqt{t}{y'}{\B}\right| \nonumber \\
                    &+ \Pqt{t}{y_\B}{\A} \cdot \Sct{Y_t=y_\B,C=\B} \nonumber\\
          &+ \sum_{y' > y_\B} \Pqt{t}{y'}{\A} \cdot\Sctinv{Y_t=y',C=\A} \nonumber \\
          &+ \left| \Pqt{t}{y_\B}{\B} - \Pqt{t}{y_\B}{\A} \right| \cdot \Sctinv{Y_t=y_\B,C=\B}.\label{pr:penalty-conv1v}
    \end{align}

On the other hand, part 3 of Lemma \ref{lem1} along with the assumption in \eqref{eqnnewassumption} shows that 
    \begin{gather*}
        \sum_{y'\in\mathcal{Y}} \Pqt{t}{y'}{\B}\cdot \Sctinv{Y_t=y',C=\B} \geq \sum_{y'\in\mathcal{Y}} \Pqt{t}{y'}{\A}\cdot \Sctinv{Y_t=y',C=\A},
    \end{gather*}   
    We can expand it as
    \begin{align*}
 &
        \Pqt{t}{y_\B}{\A} +
        \sum_{y' > y_\B } \Pqt{t}{y'}{\A} \cdot\Sctinv{Y_t=y',C=\A}  
        + \sum_{y' < y_\B}  \Pqt{t}{y'}{\A} 
        \\
         &\leq\Pqt{t}{y_\B}{\B} \cdot \Sctinv{Y_t=y_\B,C=\B} +
        \sum_{y' < y_\B}  \Pqt{t}{y'}{\B}
    \end{align*}
    which implies
    \begin{align}
       & \Pqt{t}{y_\B}{\A} \cdot \Sct{Y_t=y_\B,C=\B} +\sum_{y' > y_\B} \Pqt{t}{y'}{\A} \cdot\Sctinv{Y_t=y',C=\A}\nonumber\\
         &\leq  \left( \Pqt{t}{y_\B}{\B} - \Pqt{t}{y_\B}{\A} \right) \cdot \Sctinv{Y_t=y_\B,C=\B}  
        + \sum_{y' < y_\B} \left[ \Pqt{t}{y'}{\B}- \Pqt{t}{y'}{\A} \right]\nonumber
        \\&=\left( \Pqt{t}{y_\B}{\A} - \Pqt{t}{y_\B}{\B} \right) \cdot \Sct{Y_t=y_\B,C=\B} + \sum_{y' > y_\B} \left( \Pqt{t}{y'}{\A}- \Pqt{t}{y'}{\B} \right)\label{eqn16p1v}
        \\&
           \leq \left| \Pqt{t}{y_\B}{\A} - \Pqt{t}{y_\B}{\B} \right| \cdot \Sct{Y_t=y_\B,C=\B} + \sum_{y' > y_\B} \left| \Pqt{t}{y'}{\A}- \Pqt{t}{y'}{\B} \right|\label{eqn16p2v}
    \end{align}
    where \eqref{eqn16p1v} follows from the fact that for $c \in \G$ we have
    \[
     \Pqt{t}{y_\B}{c} \cdot \Sctinv{Y_t=y_\B,C=c} + \sum_{y' < y_\B} \Pqt{t}{y'}{c} = 1 - \Pqt{t}{y_\B}{c} \cdot \Sct{Y_t=y_\B,C=c} - \sum_{y' > y_\B} \Pqt{t}{y'}{c}.
    \]
    From \eqref{eqn16p2v} and \ref{pr:penalty-conv1v} we get
\begin{align*}
           \sum_{y'\in\mathcal{Y}}
          \left|f(y',0)\right| 
          &\leq  \sum_{y' < y_\B} \left| \Pqt{t}{y'}{\A}- \Pqt{t}{y'}{\B}\right| \nonumber \\
                    &\qquad+\left| \Pqt{t}{y_\B}{\A} - \Pqt{t}{y_\B}{\B} \right| \cdot \Sct{Y_t=y_\B,C=\B} + \sum_{y' > y_\B} \left| \Pqt{t}{y'}{\A}- \Pqt{t}{y'}{\B} \right|\\
          &\qquad+ \left| \Pqt{t}{y_\B}{\B} - \Pqt{t}{y_\B}{\A} \right| \cdot \Sctinv{Y_t=y_\B,C=\A}
 \\
            &= \sum_{y'} \left| \Pqt{t}{y'}{\A}- \Pqt{t}{y'}{\B} \right|.
    \end{align*}

    This concludes the proof of \eqref{eqnsecpart2}.
        
\end{proof}

\subsection{Proof of Theorem \ref{thmnw5}} 
\label{appndxproofth:thmnw5}

    Note that
\begin{align*}
    \mathbb{E}[D_tY_{t+1}]&=
\mathbb{E}_{D_t,Y_t}[\mathbb{E}[D_tY_{t+1}|D_t,Y_t]]
\\&=p(D_t=1)
\mathbb{E}_{Y_t|D_t=1}[\mathbb{E}[Y_{t+1}|D_t=1,Y_t]]
\\&\geq p(D_t=1)
\mathbb{E}_{Y_t|D_t=1}[Y_t]
\\&=\mathbb{E}[D_tY_{t}].
\end{align*}
Thus, 
$\mathbb{E}[D_tY_{t}]\leq \mathbb{E}[D_tY_{t+1}]
$. Now, for step $t+1$, let us consider the specific policy 
\begin{align}p(D_{t+1}=d|c,Y_{t+1}=y_{t+1})\triangleq p(D_t=d|c,Y_{t+1}=y_{t+1}).\label{eqnspecificpolicy}\end{align}
Then, observe that 
$$p(D_{t+1}=d,C=c)=p(D_t=d,C=c)$$
and therefore $\Delta_t=\Delta_{t+1}$. Moreover,
$$\mathbb{E}[D_tY_{t}]\leq \mathbb{E}[D_{t+1}Y_{t+1}]
$$
This implies that if we use the policy \eqref{eqnspecificpolicy}, the objective function at time $t+1$, $ \E[D_{t+1}Y_{t+1}] - \lambda \cdot g(\F_{t+1})$ will be greater than or equal to the objective function at time $t$. Consequently if at time $t+1$, we maximize over all possible $p(d_{t+1}|c,y_{t+1})$, the maximum value of the objective function will also be higher than that at time $t$. In other words,
\begin{gather*}
\maximize_{p(d_t|c,y_t)}\; \E[D_tY_t] - \lambda \cdot g(\F_t)
\end{gather*}
will be a non-decreasing function of $t$.

\subsection{Proof of Lemma \ref{lem:stationary} and Theorem \ref{th:band-matrix}} 
\label{appndxproofth:band-matrix}

\begin{proof}[Proof of Lemma \ref{lem:stationary}] 
Assume that the distribution of $p(c,y_t)$ is such that 
\begin{align}
p(Y_t=y|C=\A)=p(Y_t=y|C=\B), \qquad\forall y\in\mathcal{Y}\label{eqnstl1}.
\end{align}
In particular, this implies that
$\Pr(Y_t>0|C=\A) =\Pr(Y_t>0|C=\B)$ and there is no tension between maximizing $\E[D_tY_t]$ and minimizing $\Delta_t$. In this case, the optimal policy is the utility-maximizing policy and $D_t=\mathbf{1}[Y_t>0]$ (see the beginning of Section \ref{sec:effstat} for a discussion of the utility-maximizing policy).
As discussed in proof of Theorem \ref{th:penalty-conv}, the evolution of qualifications within each group follows the same Markov chain with the transition matrix $T$ given in the statement of the lemma. Consequently, \eqref{eqnstl1} implies that
\begin{align}
p(Y_{t+1}=y|C=\A)=p(Y_{t+1}=y|C=\B), \qquad\forall y\in\mathcal{Y}.
\end{align}

If we take a stationary distribution $r_Y$ of the Markov chain $T$ and assume that the initial qualifications within each group have distribution $r_Y$:
\begin{align}
p(Y_0=y|C=\A)=p(Y_0=y|C=\B)=r_Y(y),
\end{align}
we get a stationary state of our dynamic, i.e., we will have
\begin{align}
p(Y_{t}=y|C=\A)=p(Y_{t}=y|C=\B)=r_Y(y), \qquad\forall t\geq 0.
\end{align}

\end{proof}

\begin{proof}[Proof of Theorem \ref{th:band-matrix}]
First, note that in a stationary state, if $\Pr(Y>0|\A) = \Pr(Y>0|\B)$, then, under any discrimination penalty, the decision policy for both groups is the same: selecting all positively qualified individuals. Therefore, the transition probabilities of the groups and the stationary state will be the ones described in Lemma \ref{lem:stationary}.

So, assume there exists a stationary state with societal disparity. Without loss of generality, assume that $\A$ is the advantaged group in the stationary state: 
\begin{align}
    \Pr(Y>0|\A) > \Pr(Y>0|\B).\label{eqnABatu}
\end{align}
Denote the qualification distribution of the groups in this stationary state by 
$$\pi_\A(y)=p(Y=y|C=\A)$$
and 
$$\pi_\B(y)=p(Y=y|C=\B)$$
and the corresponding decision policy $p^*(d|y,c)$.
We denote the transition matrices in this state for groups $\A$ and $\B$ by $T_\A$ and $T_\B$ respectively, which can be computed as follows:
\begin{align*}
    T_\A(i,j) &= q(y^{(j)}|y^{(i)}, 1) \cdot p^*(D=1|Y=y^{(i)}, C=\A) + q(y^{(j)}|y^{(i)}, 0) \cdot p^*(D=0|Y=y^{(i)}, C=\A), \\
    T_\B(i,j) &= q(y^{(j)}|y^{(i)}, 1) \cdot p^*(D=1|Y=y^{(i)}, C=\B) + q(y^{(j)}|y^{(i)}, 0) \cdot p^*(D=0|Y=y^{(i)}, C=\B). 
\end{align*}
Note that the assumption \eqref{assumptionqt} implies that $T_\A$ and $T_\B$ are aperiodic and irreducible. 

The last part of Lemma \ref{lem1} and the assumptions \eqref{eqnaa71} and \eqref{eqnaa72} imply that 
\begin{align*}
    y^{(i)} < 0 , i < n:& \; T_\A(i,i+1) = q(y^{(i+1)}|y^{(i)}, 0)  \\
    &\leq \; q(y^{(i+1)}|y^{(i)}, 1) \cdot p^*(D=1|Y=y^{(i)}, C=\B) + q(y^{(i+1)}|y^{(i)}, 0) \cdot p^*(D=0|Y=y^{(i)}, C=\B) 
    \\&= T_\B(i,i+1), \\
    y^{(i)} < 0 , i > 0:&T_\A(i,i-1) = q(y^{(i-1)}|y^{(i)}, 0)  \\
    &\geq \; q(y^{(i-1)}|y^{(i)}, 1) \cdot p^*(D=1|Y=y^{(i)}, C=\B) + q(y^{(i-1)}|y^{(i)}, 0) \cdot p^*(D=0|Y=y^{(i)}, C=\B) 
    \\&= T_\B(i,i-1), \\
    y^{(i)} > 0, i < n:& \; T_\A(i, i+1) = q(y^{(i+1)}|y^{(i)}, 1) \cdot p^*(D=1|Y=y^{(i)}, C=\A) + q(y^{(i+1)}|y^{(i)}, 0) \cdot p^*(D=0|Y=y^{(i)}, C=\A) \\
    &\leq \; q(y^{(i+1)}|y^{(i)}, 1) 
    \\&= T_\B(i, i+1), \\
    y^{(i)} > 0, i > 0:& \; T_\A(i, i-1) = q(y^{(i-1)}|y^{(i)}, 1) \cdot p^*(D=1|Y=y^{(i)}, C=\A) + q(y^{(i-1)}|y^{(i)}, 0) \cdot p^*(D=0|Y=y^{(i)}, C=\A) \\
    &\geq \; q(y^{(i-1)}|y^{(i)}, 1) \\&= T_\B(i, i-1).
\end{align*}
Therefore, Lemma \ref{lem:adv} (given below) yields
\[
\sum_{y^{(i)} > 0} \pi_\A(y^{(i)}) \leq \sum_{y^{(i)} > 0} \pi_\B(y^{(i)}),
\]
or $\Pr(Y>0|\A) \leq \Pr(Y>0|\B)$. But this contradicts the assumption in \eqref{eqnABatu} that  $\A$ is the advantaged group.
\end{proof}

\begin{lemma} \label{lem:adv}
Consider two (discrete-time) Markov process $\{A_t\}$ and  $\{B_t\}$ sharing the state support $\mathcal{Y} = \{y^{(1)}, \cdots, y^{(n)}\}$ such that $y^{(1)} \leq \cdots \leq y^{(n)}$. Assume these two processes are identified by transition matrices $P$ and $Q$ such that for $i,j\in\{1,2,\cdots,n\}$ we have
$$P_{i,j}=\Pr(A_t=y^{(j)}|A_{t-1}=y^{(i)}),$$
$$Q_{i,j}=\Pr(B_t=y^{(j)}|B_{t-1}=y^{(i)}).$$
Assume that $A_t$ and $B_t$ are birth-death Markov processes and their transition matrices satisfy
\begin{align*}
        |i-j| \leq 1 \iff P_{i,j} > 0, \\
        |i-j| \leq 1 \iff Q_{i,j} > 0. 
\end{align*}
Assume $P$ and $Q$ satisfy the following for every $1\leq k\leq n-1$:
\begin{align}
        &P_{k,k+1} \geq Q_{k,k+1}, \label{PQassu1}\\
        &P_{k+1,k} \leq Q_{k+1,k}.\label{PQassu2}
\end{align}
 Then,  for every $s$
\[
\lim_{t \rightarrow \infty} \Pr(A_t\geq s) \geq \lim_{t \rightarrow \infty} \Pr(B_t\geq s).
\]
and
\[
\lim_{t \rightarrow \infty} \Pr(A_t\leq s) \leq \lim_{t \rightarrow \infty} \Pr(B_t\leq s).
\]
In other words, the stationary distribution of $A_t$ stochastically dominates the stationary distribution of $B_t$.
\end{lemma}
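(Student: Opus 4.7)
The plan is to combine the detailed balance equations of birth--death chains with the monotone likelihood ratio (MLR) implication of stochastic dominance. First, the hypothesis $|i-j|\leq 1 \iff P_{i,j}>0$ forces $P_{i,i}>0$ (aperiodicity) and $P_{i,i\pm 1}>0$ (irreducibility), and likewise for $Q$, so each chain admits a unique stationary distribution---call them $\pi^A$ and $\pi^B$---to which the process converges regardless of its initial state. Hence the two limits in the statement equal $\sum_{y^{(i)}\geq s}\pi^A_i$ and $\sum_{y^{(i)}\geq s}\pi^B_i$ respectively, and the task reduces to comparing $\pi^A$ and $\pi^B$ in the stochastic order.

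Second, any irreducible birth--death chain on a finite line is reversible, so the detailed balance equations yield
\[
\frac{\pi^A_{i+1}}{\pi^A_i}=\frac{P_{i,i+1}}{P_{i+1,i}},\qquad \frac{\pi^B_{i+1}}{\pi^B_i}=\frac{Q_{i,i+1}}{Q_{i+1,i}},\qquad 1\leq i\leq n-1.
\]
Combining the two hypotheses \eqref{PQassu1} and \eqref{PQassu2} gives $P_{i,i+1}/P_{i+1,i}\geq Q_{i,i+1}/Q_{i+1,i}$, so the likelihood ratio $r_i:=\pi^A_i/\pi^B_i$ is non-decreasing in $i$.

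Finally, I would deduce stochastic dominance from the monotonicity of $r_i$. Fix any index $s$. Since $r_i\leq r_s$ for $i\leq s$, we have $\sum_{i\leq s}\pi^A_i=\sum_{i\leq s}r_i\pi^B_i\leq r_s\sum_{i\leq s}\pi^B_i$; if $r_s\leq 1$ this already gives $\sum_{i\leq s}\pi^A_i\leq \sum_{i\leq s}\pi^B_i$. Otherwise $r_s>1$, and then $r_i\geq r_s$ for $i>s$ yields $\sum_{i>s}\pi^A_i\geq r_s\sum_{i>s}\pi^B_i\geq \sum_{i>s}\pi^B_i$, so (subtracting from $1$) the same CDF inequality holds. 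Both conclusions of the lemma are immediate rewrites of $\sum_{i\leq s}\pi^A_i\leq \sum_{i\leq s}\pi^B_i$.

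I do not anticipate a serious obstacle. The only mildly subtle piece is the MLR-implies-dominance step, handled by the case split on whether $r_s\leq 1$ or $r_s>1$. A coupling-based alternative---explicitly constructing a monotone coupling $(A_t,B_t)$ with $A_t\geq B_t$ almost surely once $A_0\geq B_0$---would also work, but would require more bookkeeping than the detailed-balance route outlined above.
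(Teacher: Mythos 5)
Your proof is correct and follows essentially the same route as the paper's: both rest on the birth--death identity $\pi_{i+1}/\pi_i = P_{i,i+1}/P_{i+1,i}$ (the paper writes it as the product formula $\pi_i \propto \alpha_0\alpha_1\cdots\alpha_{i-1}$ with $\alpha_k = P_{k,k+1}/P_{k+1,k}$, citing a standard reference). The only divergence is the final step, where the paper asserts monotonicity of the explicit ratio of partial products in each $\alpha_k$ while you run the standard monotone-likelihood-ratio-implies-stochastic-dominance case split; both are valid, and yours is arguably the more self-contained finish.
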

\begin{proof}
    Let $\alpha_0=\beta_0=1$ and 
$$\alpha_k=\frac{P_{k,k+1}}{P_{k+1,k}},\qquad k\in\{1,2,\cdots, n-1\}.$$
$$\beta_k=\frac{Q_{k,k+1}}{Q_{k+1,k}},\qquad k\in\{1,2,\cdots, n-1\}.$$
The assumptions in \eqref{PQassu1} and \eqref{PQassu2} imply that $\alpha_k\geq \beta_k$ for all $k$.
 Since the birth-death Markov chain is aperiodic and irreducible, the stationary distribution is unique and given by \cite{Gallager1996}
 \begin{align*}
\lim_{t \rightarrow \infty} \Pr(A_t=i) &=\frac{\alpha_0\alpha_1\alpha_2\cdots\alpha_{i-1}}{\sum_{j=1}^n\alpha_0\alpha_1\alpha_2\cdots\alpha_{j-1}}, \qquad i=1,\cdots, n,
\\
\lim_{t \rightarrow \infty} \Pr(B_t=i) &=\frac{\beta_0\beta_1\beta_2\cdots\beta_{i-1}}{\sum_{j=1}^n\beta_0\beta_1\beta_2\cdots\beta_{j-1}}, \qquad i=1,\cdots, n.
 \end{align*}
 Let
\begin{align*}g(x_0,x_1,\cdots, x_{n-1})=
\frac{\sum_{i=s}^nx_0x_1x_2\cdots x_{i-1}}{\sum_{j=1}^nx_0x_1x_2\cdots x_{j-1}}.
 \end{align*}
Then,
 \begin{align*}
\lim_{t \rightarrow \infty} \Pr(A_t\geq s) &=g(\alpha_0,\alpha_1,\cdots, \alpha_{n-1})\\
\lim_{t \rightarrow \infty} \Pr(B_t\geq s) &=g(\beta_0,\beta_1,\cdots, \beta_{n-1})
 \end{align*}
 Since $g(x_0,x_1,\cdots, x_{n-1})$ is increasing in $x_k$ for every $k$ and using the fact that $\alpha_k\geq \beta_k$ for all $k$, we get the desired result. 
 
\end{proof}

\end{document}